\documentclass{article}



\PassOptionsToPackage{numbers, sort}{natbib}
\PassOptionsToPackage{dvipsnames,table}{xcolor}

\usepackage[final]{neurips_2022}



\usepackage{wrapfig}
\usepackage{marginnote}
\usepackage[utf8]{inputenc} 
\usepackage[T1]{fontenc}    
\usepackage{hyperref}       
\usepackage{url}            
\usepackage{booktabs}       
\usepackage{amsfonts}       
\usepackage{nicefrac}       
\usepackage{microtype}      
\usepackage{xcolor}         

\usepackage{neurips_2022}




\usepackage[hashEnumerators,smartEllipses]{markdown}
\usepackage[utf8]{inputenc} 
\usepackage[T1]{fontenc}    
\usepackage{hyperref}       
\usepackage{url}            
\usepackage{booktabs}       
\usepackage{amsfonts}       
\usepackage{nicefrac}       
\usepackage{microtype}      
\usepackage{xcolor}         


\usepackage{multirow}
\usepackage{amsfonts}
\usepackage{epsfig}
\usepackage{amsmath}
\usepackage{esvect}
\usepackage{amssymb}
\usepackage{bbm}
\usepackage{amsthm}
\usepackage{comment}
\usepackage[english]{babel}
\usepackage{forloop}

\renewcommand\citet[1]{\citeauthor{#1}~\cite{#1}} 
\usepackage{color}
\usepackage{algorithm}
\usepackage{algpseudocode}
\usepackage{float}

\usepackage{nccmath}
\usepackage{pgfplots}
\usepackage{booktabs}
\usepackage{listings}
\usepgfplotslibrary{external}

\usepackage{wrapfig}
\usepackage{enumitem}
\usepackage{url}
\usepackage{etoolbox}
\usepackage{subfigure}
\usepackage{longtable}
\usepackage{multirow}
\usepackage{sidenotes}

\DeclareMathOperator*{\argmax}{arg\,max}
\DeclareMathOperator*{\argmin}{arg\,min}

\newtheorem{theorem}{Theorem}[section]

\lstset{%
  backgroundcolor=\color{gray!25},
  basicstyle=\sffamily \scriptsize,
  breaklines=true
}

\usepackage{siunitx}
\sisetup{detect-all}

\usepackage{mathtools}
\usepackage{stmaryrd}
\usepackage{mathrsfs}
\usepackage{amssymb}



\newcommand{\ncomponents}{m}
\newcommand{\qfunc}{$Q$-function} 
\newcommand{\qfuncs}{\qfunc s}



\renewcommand{\eqref}[1]{Eq.~\ref{#1}}
\newcommand{\figref}[1]{Fig.~\ref{#1}}

\newcommand{\margincomment}[2]{  
    \sidenote[]{
        {\footnotesize 
            \textbf{#1}: #2 }
        }
}

\def\comments{0}

\ifx\comments\undefined
    \def\comments{0}
\fi
\if\comments1
    \newcommand{\inlinecomment}[2]{\textit{\color{red} \textbf{#1}: #2 }}
    \newcommand{\commentjm}[1]{ \margincomment{JM}{#1} }
    \newcommand{\commentjmi}[1]{ \inlinecomment{JM}{#1} }
    \newcommand{\commentcs}[1]{ \margincomment{CS}{#1} }
    \newcommand{\commentcsi}[1]{ \inlinecomment{CS}{#1} }
    
    \newcommand{\commentts}[1]{ \margincomment{TS}{#1} }
    \newcommand{\commenttsi}[1]{ \inlinecomment{TS}{#1} }
    \newcommand{\commentea}[1]{ \margincomment{EA}{#1} }
    \newcommand{\commenteai}[1]{ \inlinecomment{EA}{#1} }
     
    \newcommand{\commentad}[1]{ \margincomment{AD}{#1} }
    \newcommand{\commentadi}[1]{ \inlinecomment{AD}{#1} }
\else
    \newcommand{\inlinecomment}[2]{}
    
    \newcommand{\commentjm}[1]{ }
    \newcommand{\commentjmi}[1]{ }
    \newcommand{\commentad}[1]{  }
    \newcommand{\commentadi}[1]{  }
    \newcommand{\commentcs}[1]{ }
    \newcommand{\commentcsi}[1]{ }
    \newcommand{\commentts}[1]{ }
    \newcommand{\commenttsi}[1]{ }
    \newcommand{\commentea}[1]{ }
    \newcommand{\commenteai}[1]{  }
\fi



\usepackage{etoolbox}
\makeatletter
\patchcmd\@acf{\hskip\z@}{}{}{}
\patchcmd\@acf{\hskip\z@}{}{}{}
\makeatother

\usepackage{pgfplotstable}
\pgfplotsset{compat=1.8}

\title{Value Function Decomposition for Iterative Design of Reinforcement Learning Agents}

\hyphenpenalty=10000
\hbadness=10000

\makeatletter


\def\@fnsymbol#1{\ensuremath{\ifcase#1\or \dagger\or *\or \ddagger\or
   \mathsection\or \mathparagraph\or \|\or **\or \dagger\dagger
   \or \ddagger\ddagger \else\@ctrerr\fi}}

\author{
James MacGlashan\thanks{Sony AI} \\
\texttt{james.macglashan@sony.com}
\And
Evan Archer$^\dagger$\thanks{Equal contribution} \\
\texttt{evan.archer@sony.com}
\And
Alisa Devlic$^{\dagger *}$ \\
\texttt{alisa.devlic@sony.com}
\And
Takuma Seno$^{\dagger *}$ \\
\texttt{takuma.seno@sony.com}
\And
Craig Sherstan$^{\dagger *}$ \\
\texttt{craig.sherstan@sony.com}
\AND
Peter R. Wurman$^\dagger$ \\
\texttt{peter.wurman@sony.com}
\And
Peter Stone$^\dagger$\thanks{The University of Texas at Austin} \\
\texttt{pstone@cs.utexas.edu}
}

\makeatother

\newcounter{subfigurenumber}

\begin{document}

\maketitle

\begin{abstract}
    Designing reinforcement learning (RL) agents is typically a difficult process that requires numerous design iterations. Learning can fail for a multitude of reasons, and standard RL methods provide too few tools to provide insight into the exact cause.
    In this paper, we show how to integrate \textit{value decomposition} into a broad class of actor-critic algorithms and use it to assist in the iterative agent-design process. 
    Value decomposition separates a reward function into distinct components and learns value estimates for each. These value estimates provide 
    insight into an agent's learning and decision-making process and 
    enable new training methods to mitigate common problems. 
    As a demonstration, we introduce SAC-D, a variant of soft actor-critic (SAC) adapted for value decomposition. SAC-D maintains similar performance to SAC, while learning a larger set of value predictions.
    We also introduce decomposition-based tools that exploit this information, including a new reward
    \textit{influence} metric, which measures each reward component's effect on agent decision-making. Using these tools, we provide several demonstrations of decomposition's use in identifying and addressing problems in the design of both environments and agents.
    Value decomposition is broadly applicable and easy to incorporate into existing algorithms and workflows, making it a powerful tool in an RL practitioner's toolbox.

\end{abstract}

\section{Introduction}
Deep reinforcement-learning (RL) approaches have achieved successes in a range of application areas such as gaming (\cite{silver2017mastering, wurman2022GT, berner2019dota, vinyals2019grandmaster}), robotics (\cite{levine2016end}), and the natural sciences (\cite{mills2020finding, tunyasuvunakool2021highly}). Despite these successes, applying RL techniques to complex control problems remains a daunting undertaking, where initial attempts often result in underwhelming performance. 
Unfortunately, there are many reasons why an agent may fail to learn a good policy, making it difficult to diagnose which reason(s) caused a particular agent to fail.
For example: an agent may fail because the state features were insufficient to make accurate predictions, different task objectives defining the reward function may be imbalanced, the agent may fail to sufficiently explore the state-action space, values may not accurately propagate to more distant states, the neural network may not have sufficient capacity to approximate the policy or value function(s), or, there may be subtle differences between training and evaluation environments. Without a way to diagnose the causes of poor performance or to recognize when a problem has been remedied, practitioners typically engage in a long trial-and-error design process until an agent reaches a desired level of performance.
Frustrations with this trial-and-error process 
have been expressed in other work~\cite{hayes_practical_2022}.

We describe how \textit{value decomposition}, a simple, broadly-applicable technique, can address these application challenges.
In RL, the agent receives a reward that is often a sum of many reward components, each designed to encode some aspect of the desired agent behavior. From this \textit{composite reward}, it learns a single \textit{composite value function}.
Using value decomposition, an agent learns a \textit{component value function} for each reward component.
To perform policy optimization, the composite value function is recovered by taking a weighted sum of the component value functions. While prior work has proposed value decomposition methods for discrete-action Q-learning ~\cite{Juozapaitis2019ExplainableRL, Russell2003QDecompositionFR, HybridRA},
we show how value decomposition can be incorporated into a broad class of actor-critic (AC) methods. In addition, we introduce SAC-D, a version of soft actor-critic (SAC)~\cite{haarnoja2018soft,sac} with value decomposition, and explore its use in multi-dimensional continuous-action environments.
We also introduce the \textit{influence} metric, which measures how much an agent's decisions are affected by each reward component.

While earlier work focuses on its use in reward design~\cite{Juozapaitis2019ExplainableRL,hayes_practical_2022}, value decomposition can facilitate diagnosis of a wide range
of issues and enable new training methodologies.
To demonstrate its utility, in Sec.~\ref{sec:analysis_examples} we show how to use it to: (1) diagnose insufficient state features; (2) diagnose value prediction errors and exploit the decomposed structure to inject background-knowledge; and (3) identify reward components that are inhibiting exploration and mitigate the effect by gradually incorporating component predictions into policy optimization.

Value decomposition's additional diagnostic and training capabilities come at the cost of a more-challenging prediction problem: instead of learning a single value function, many must be learned.
To investigate if this difficulty negatively impacts agent performance, we compare the average performance of SAC-D to SAC on benchmark environments. We find that a naive implementation of SAC-D underperforms SAC
and then show how to improve SAC-D so that it matches and sometimes exceeds SAC's performance. These improvements may also be applied to value decomposition for other AC algorithms.

While variations of value decomposition has been explored extensively in past work (see Sec.~\ref{sec:related}), in this paper, we make the following contributions.
(1) We show how to integrate value decomposition into a broad class of actor-critic algorithms. 
(2) We analyze the performance of different implementations of value decomposition for SAC on a range of benchmark continuous-action environments. (3) We introduce the \textit{influence} metric: a novel value decomposition metric for measuring how much each reward component affects decision-making. (4) We provide a set of illustrative examples of how value decomposition and influence can be used to diagnose various kinds of learning challenges. (5) We describe new training methods that exploit the value decomposition structure and can be used to mitigate different learning challenges.

\section{Background}
\subsection{MDPs and Q-functions}
In RL, an agent's interaction with the environment is modeled as a Markov Decision Process (MDP): $(S, A, P, R, \gamma)$, where $S$ is a set of states, $A$ is a set of actions, $P : S \times A \times S \rightarrow \mathbb{R}$ is a state transition probability function $P(s, a, s')=\Pr(S_{t+1}=s'|S_{t}=s, A_{t}=a)$, $R: S \times A \rightarrow \mathbb{R}$ is a reward function $R(s, a)=E[R_{t+1}|S_{t}=s, A_{t}=a]$, and $\gamma \in [0, 1]$ discounts future rewards.\footnote{For continuing tasks, $\gamma$ must be $< 1$, and we only consider algorithms for $\gamma < 1$.}
The goal of an agent is to learn a policy $\pi(a|s)$ that maps states to an action probability distribution that maximizes the sum of future rewards. The agent is trained to maximize the discounted return $E \left[ \sum_t^\infty \gamma^t R(s_t, a_t) \right]$.
The \qfunc{} maps state-action pairs to the expected cumulative discounted reward when starting in state $s$, taking action $a$, and then following policy $\pi$ thereafter: 
\begin{equation}
\label{eq:qdef}
  Q^\pi(s, a) \triangleq \mathbb{E}\left[\sum_{t=0}^\infty \gamma^{t} R(s_t, a_t) | \pi, s_0=s, a_0=a \right].
\end{equation}

\subsection{Soft actor-critic}
Soft actor-critic (SAC)~\cite{haarnoja2018soft,sac} is an off-policy actor-critic algorithm parameterized with five neural networks: a stochastic policy network $\pi$ with parameters $\phi$, and two pairs of \qfuncs{} and target \qfuncs{} with 
parameters ($\theta_1$, $\theta_2$) and ($\bar{\theta_1}$, $\bar{\theta_2}$), respectively. As with other actor-critic algorithms, SAC has two main steps: policy evaluation (in which it estimates the Q-function for policy $\pi$), and policy improvement (in which it optimize the policy to maximize its Q-function estimates). Unlike other actor-critic algorithms, SAC optimizes a maximum entropy formulation of the MDP, in which rewards are augmented with policy entropy bonuses that prevent premature policy collapse. 
To perform policy evaluation and improvement, SAC minimizes the following loss functions simultaneously:
\begin{subequations}
    \begin{align}
        L_{Q_i}  &= \mathbb{E}\left[ \frac{1}{2} \left( Q(s, a; \theta_i) - y \right)^2 \right]~\text{for}~i\in \left\{1,2\right\},\\
        L_\pi    &= \mathbb{E}\left[\alpha \log\pi(u | s; \phi) - \min_{j\in \{1,2\}} Q(s, u; \theta_j) \right],
    \end{align}
    \label{eq:sac_eqs}
\end{subequations}
where $(s, a, r, s')$ transitions are drawn from an experience replay buffer, $y :=  r + \gamma \left(\min_{j\in \{1,2\}} Q(s', a'; \bar{\theta_j}) - \alpha\log\pi(a' | s'; \phi) \right)$,
$a' \sim \pi(\cdot|s';\phi)$, $u \sim \pi(\cdot|s;\phi)$, and $\alpha$ is an (optionally learned) entropy regularization parameter. 
The $\min$ of \qfunc{} pairs addresses overestimation bias in value function estimation~\cite{hasselt2010double,fujimoto2018addressing}. 
The parameters $\bar{\theta_1}$ and $\bar{\theta_2}$ are updated toward $\theta_1$ and $\theta_2$ via an exponentially moving average each step.

\subsection{Environments}
\label{sec:sample_env}
Throughout the paper, we italicize descriptive names for the components of the continuous-action Lunar Lander (LL), Bipedal Walker (BW) and Bipedal Walker Hardcore (BWH)  environments. In LL, an agent must land a spacecraft in the center of a landing zone using as little fuel as possible. The reward components include: a reward for successful \textit{landing}; penalties for crashing (\textit{crash}) and engine usage (\textit{main}, \textit{side}); shaping rewards used to encourage the agent to stay upright (\textit{angle}), move towards the center of the landing pad (\textit{position}) with low velocity (\textit{velocity}) and land with both legs (\textit{right leg}, \textit{left leg}). In BW, an agent learns to make a 2-legged robot walk. The reward components include: a reward for \textit{forward} progress, a penalty for falling (\textit{failure}), a cost for actions (\textit{control}), and a shaping reward to discourage head movement (\textit{head}). BWH is identical to BW, but adds additional obstacles for the agent to navigate.

\section{Value decomposition for actor-critic methods}
Most RL algorithms 
estimate the value function and use it to improve its policy. Unfortunately, value functions and policies provide little insight into the agent's decision-making.
However, reward functions are often \textit{composite} functions of multiple \textit{component} state-action signals. By learning a value function estimate for the current policy for each component, practitioners gain insight into what the agent expects to happen 
and how these reward components interact. Naturally, policy improvement still requires the composite value function. 
In Sec.~\ref{sec:compositeQ} we show how the composite Q-function can be recovered from the component Q-functions. From this property, a range of actor-critic algorithms can be adapted to use value decomposition by following the below template.\footnote{Composite state value functions can similarly be recovered; however, Q-functions allow for deeper introspection, so we focus on that setting in this work.} 
\begin{enumerate}
\item Alter Q-function networks to have $m$ outputs instead of $1$, where $m$ is the number of reward components.
\item Use the base algorithm's Q-function update for each of the $m$ components, replacing the composite reward term with the respective component reward term.
\item Apply the base algorithm's policy improvement step by first recovering the composite Q-function.
\end{enumerate}
For example, this template can be applied to algorithms that use TD(0)~\cite{sutton2018reinforcement}, or ones that use Retrace~\cite{munos2016safe}. It works with algorithms that improve the policy by differentiating through the Q-function~\cite{sac,Lillicrap2016continuous,fujimoto2018addressing}, and ones that fit it toward non-parametric target action distributions~\cite{abdolmaleki2018relative}.

Although this template is conceptually simple, learning component Q-functions poses a more difficult prediction problem: multiple predictions must be learned instead of one composite prediction. Ideally, this increased difficulty would not negatively impact agent performance. In Sec.~\ref{sec:sac-d} we introduce SAC-D, an adaptation of SAC to use value decomposition, and describe additions we made to the above template to maintain performance with conventional SAC. Although these additions
are contextualised to SAC, they are general and can be used when adapting other actor-critic algorithms.

\subsection{Recovering the composite Q-function}
\label{sec:compositeQ}

We assume the environment's reward function is a linear combination of $\ncomponents$ components: $R(s, a) \triangleq \sum_i^\ncomponents w_i R_i(s, a)$, where $w_i \in \mathbb{R}$ is a scalar \textit{component weight} for the $i$th component and $R_i(s, a) \rightarrow \mathbb{R}$ is the reward function of the $i$th component for state-action pair $\left(s, a\right)$. Applying the linearity of expectation, we find the $Q$-function inherits the linear structure from the reward\footnote{See Theorem~\ref{thm:valdecomp} for the proof. This linear decomposition property of value functions has been explored elsewhere~\cite{barreto2017successor,dayan1993improving}, but in different contexts and with different motivations. See Sec.~\ref{sec:related} for more information.}: 
\begin{equation}
\label{eq:recover}
    Q^\pi(s, a) = \sum_i^\ncomponents w_i Q^\pi_i(s, a),
\end{equation}
where we define the $i$'th \textit{component \qfunc{}} as $Q^\pi_i(s, a) \triangleq E[\sum_t \gamma^t R_i(s_t, a_t) | \pi, s_0=s, a_0=a]$. Unless otherwise specified, we assume $w_i = 1$ for all $i$. 
Because the component weights are factored out of the component Q-functions, they may be varied without changing the component prediction target, allowing the policy to be evaluated for any weight combination. 
Although the assumption of linearity may seem restrictive, note that each reward component may be a non-linear function of state variables, allowing for very expressive environment rewards. Furthermore, many environments, including all the environments we investigate in this paper, are naturally structured as a sum of (non-linear) reward components.

\subsection{SAC with value decomposition}
\label{sec:sac-d}
\begin{algorithm}[t]
\footnotesize
\caption{SAC-D and SAC-D-CAGrad Update}
\label{alg:sac_d}
\begin{algorithmic}[1]
\Require Experience replay buffer $B$; twin \qfunc{} parameters  $\theta_1, \theta_2$ (with $\Theta = \theta_1 \cup \theta_2$) and target parameters $\bar{\theta_1},\bar{\theta_2}$; policy parameters $\phi$; discount factor $\gamma$; entropy parameter $\alpha$; reward weights $w \in \mathbb{R}^{m+1}$; learning rates $\lambda_q, \lambda_\pi$; target network step size $\eta$; Boolean use\_cagrad for SAC-D-CAGRAD or SAC-D.
\State Sample transition (minibatch) $(s, a, r, s') \sim B$ \algorithmiccomment{$r \in \mathbb{R}^\ncomponents$ is a vector of $\ncomponents$ reward components}
\State Sample policy actions $a' \sim \pi(\cdot | s'; \phi)$ and $u \sim \pi(\cdot| s; \phi)$
\State $r_{m+1} \gets \gamma \alpha \log\pi(a' | s'; \phi)$ \label{alg1:line:ent_reward} \algorithmiccomment{Extend reward vector to include entropy reward}
\State $j \gets \underset{j \in \{1, 2\}}{\arg\min} \sum_i^{\ncomponents+1} w_i Q_i(s', a'; \bar{\theta_j})$ \algorithmiccomment{Find target network by minimum composite $Q$-value}
\State $y_{i} \gets r_{i} + \gamma Q_i(s', a'; \bar{\theta}_{j})$ \label{alg1:line:y}
\State $LQ_i \gets \sum_{j=1}^2 \frac{1}{2} \left( Q_i(s, a; \theta_j) - y_i \right)^2$ \label{alg1:line:LQ}
\State $L\pi \gets \alpha \log\pi(u | s; \phi) - \min_{j\in \{1,2\}} \sum_i^{m+1} w_i Q_i(s, u; \theta_j)$ 
\If{\texttt{use\_cagrad}}
  \State $\Theta \gets \Theta - \lambda_q$\Call{CAGrad}{$\mathbf{J}_{LQ}, \Theta$} 
  \label{alg1:line:cagrad}
\Else
  \State $\Theta \gets \Theta - \lambda_q \nabla_\Theta \frac{1}{m+1} \sum_i^{m+1} LQ_i$ 
\EndIf
\State $\phi \gets \phi - \lambda_\pi \nabla_\phi L\pi$
\State Update target networks $\bar{\Theta} \gets (1 - \eta)\bar{\Theta} + \eta \Theta$
\end{algorithmic}
\end{algorithm}
Here we introduce SAC-D (Alg.~\ref{alg:sac_d}), an adaptation of SAC to use value decomposition.
Adapting SAC only requires one additional consideration from our template: the entropy bonus reward term SAC adds is treated as an $m+1$'th reward component: $R_{m+1}(s') \triangleq \gamma \alpha \log \pi(a' |s'; \phi)$ (line~\ref{alg1:line:ent_reward}). However, this approach, which we refer to as \textit{SAC-D-Naive}, underperforms SAC in many settings. We found two additional modifications essential to match the performance of SAC. The first 
concerns how we apply the twin-network minimum of Eq.~\ref{eq:sac_eqs} in the context of value decomposition. 
The second 
is to use \textbf{C}onflict-\textbf{A}verse \textbf{Grad}ient descent (CAGrad)~\cite{cagrad} to address optimization problems that arise when training multi-headed neural networks.
We refer to SAC with value decomposition and the twin-network correction as \textit{SAC-D}, and the variant with twin-network correction and CAGrad as \textit{SAC-D-CAGrad}.

\textbf{Twin-network minimums in value decomposition:}
In SAC, the Q-value target is the minimum of two Q-function networks (\eqref{eq:sac_eqs}). Using the same Q-value update rule for each component, as described in our template, suggests using a minimum for each component target: $q_i := \min_{j \in \{1,2\}} Q_i(s, a; \bar{\theta_j})$. However, this is not a good choice in practice. The purpose of the twin-network minimum is to mitigate overestimation bias from the feedback loop of the policy optimizing the Q-function. Because the policy optimizes the composite \qfunc, a better approach is to use all the predictions from the network with the minimum composite Q-function (Alg.~\ref{alg:sac_d}, lines~\ref{alg1:line:y}-\ref{alg1:line:LQ})). This approach reduces underestimation bias and improves performance compared to an element-wise minimum (see Sec.~\ref{sec:robustness}).

\textbf{Mediating the difficulty of multi-objective optimization:}
Even though the scalar values of the composite \qfunc{} are identical to those used in SAC, simultaneous optimization of all $Q_i^\pi$ components may introduce training problems common in multi-objective optimization: conflicting gradients, high curvature and large differences in gradient magnitudes~\citep{yu2020gradient}.\footnote{Multi-headed prediction can also improve representation learning, as in work on auxiliary tasks~\cite{unreal,learning_to_navigate}}
The CAGrad method, designed for the multi-task RL setting, addresses these issues by replacing the gradient of a multi-task objective with a weighted sum of per-task loss gradients. This updated gradient step maximizes the improvement of the worst-performing task on each optimization step, and still converges to a minimum of the unmodified loss. We incorporate CAGrad into SAC-D by treating each component as a ``task'', and update the gradient vector accordingly 
(Alg.~\ref{alg:sac_d}, line~\ref{alg1:line:cagrad}). 

\begin{figure}[t]
    \centering
    \subfigure{\includegraphics[width=.95\textwidth]{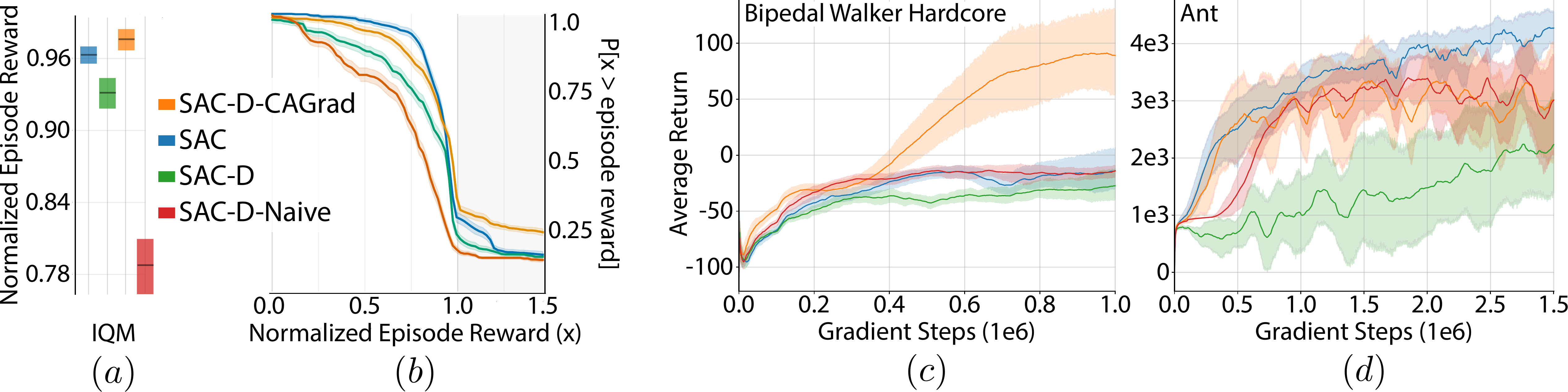}\label{fig:robustness_aggregate_metrics}}
    \subfigure{\label{fig:robustness_score_distribution}}
    \subfigure{\label{fig:bwh}}
    \subfigure{\label{fig:ant}}
    \caption{\textbf{Benchmark results}
    \textbf{(a)}-\textbf{(b)} Comparison of algorithms over 8 environments. For each algorithm, scores are collected from the 10 final checkpoints (10 episodes $\times$ 10 final policies), and normalized on individual environments. The 95\% CIs are estimated using stratified bootstrap sampling.
    \textbf{(a)} The interquartile mean (IQM), which gives the mean over the central 50\% of values.
    \textbf{(b)} For each normalized score $x$, \% of runs achieving a score of at least $x$. Grey region, $x>1.0$, corresponds to runs outperforming the mean SAC baseline score. 
    \textbf{(c)} Training curves on continuous-control benchmarks for BWH. Solid lines represent mean episode return over 10 trials. Shaded areas are confidence intervals. Lines are uniformly smoothed for clarity.
    \textbf{(d)} Same as \textbf{(c)}, but for Ant.
    }
    \label{fig:robustness_learning_curves}
\end{figure}
\section{Robustness experiments}
\label{sec:robustness}

We benchmark SAC-D, SAC-D-CAGrad and SAC-D-Naive against SAC on a selection of continuous-action Gym~\cite{brockman2016openai} environments. 
For each environment, we exposed existing additive reward components \textit{without} altering the behavior of the environments or their composite rewards. That is, these environments already implemented their reward functions as a linear combination of separate reward components and we simply exposed that information to the algorithm (for details, see App.~\ref{appendix:environment}). As outlined in App.~\ref{appendix:detail}, we used hyperparameters previously published for use with SAC~\cite{sac} for all experiments. We tied SAC-D's hyperparameters to SAC's because our goal is for value decomposition to be a drop-in addition without significant loss in agent-performance. However, it is possible better performance could be reached with tuning.

Figure~\ref{fig:robustness_aggregate_metrics} shows the performance of each algorithm aggregated across all environments and all experimental runs. Figure~\ref{fig:robustness_score_distribution} shows the same information, but highlights the distribution of scores across experimental runs.  
In aggregate, SAC-D-CAGrad slightly outperforms SAC, although it has a broader range of performance scores. SAC-D-Naive significantly underperforms SAC.

We provide training curves for the 8 environments investigated in App.~\ref{appendix:training_curves}, but highlight the atypical training curves for BWH (\figref{fig:bwh}) and Ant (\figref{fig:ant}).
In the case of BWH, SAC-D-CAGrad significantly outperforms SAC. It was not the goal of this work for SAC-D to improve on SAC. Rather, we sought to provide more insights into the learning process. As such, we make no strong claims about when SAC-D can be expected to outperform SAC. Nevertheless, this result does suggest that SAC-D may sometimes benefit from auxiliary task learning. We leave this question as a subject for future investigation.

In Ant, SAC-D (without CAGrad) underperforms all other methods. We found that infrequent environment termination causes large Q-function errors and leads to catastrophic gradient conflicts. Further analysis of termination issues and CAGrad behavior is described in Appendices~\ref{appendix:ant_analysis} and~\ref{appendix:cagrad_analysis} respectively.

\begin{figure}[t]
\centering
\includegraphics[width=0.75\columnwidth]{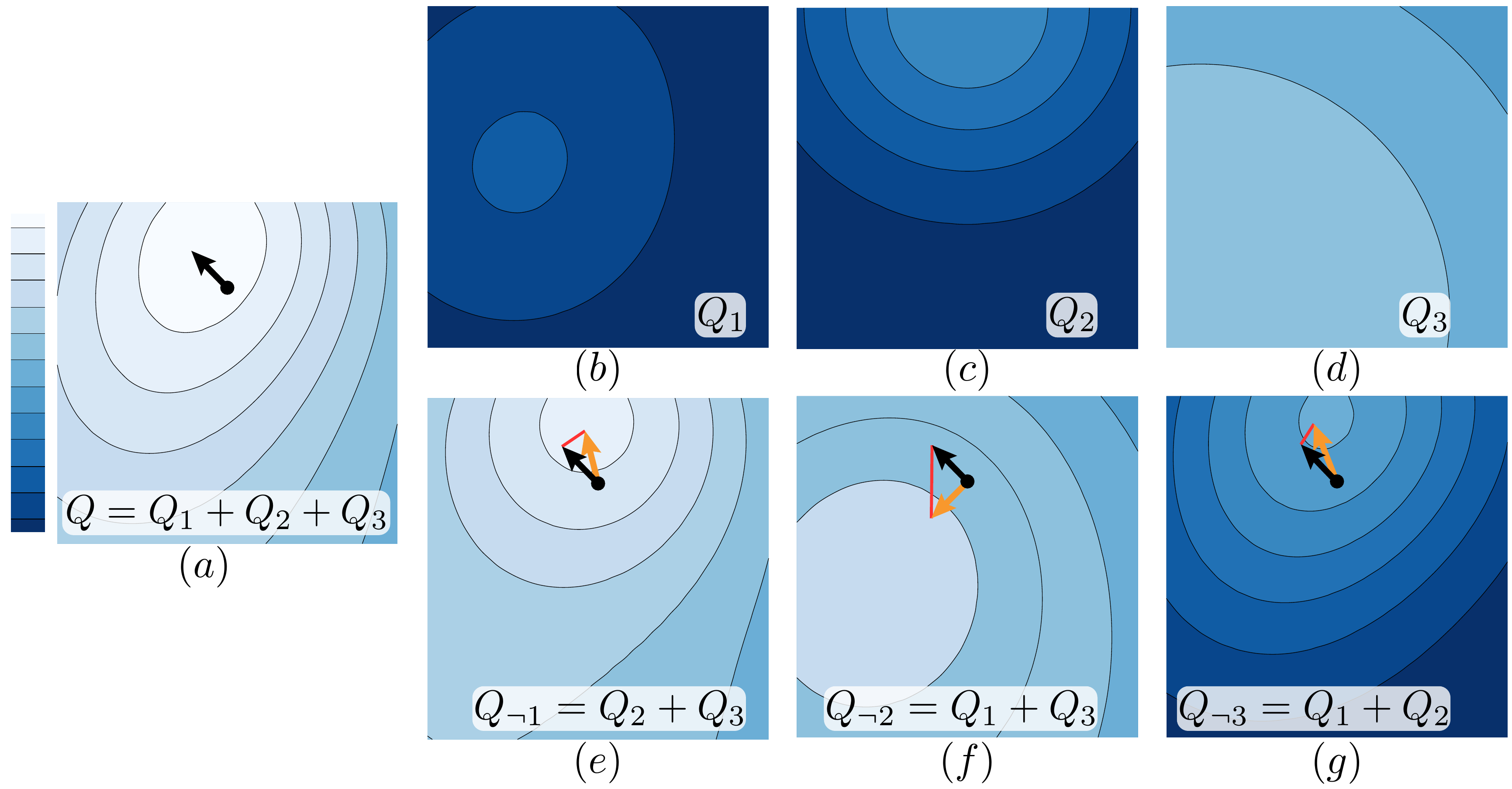}
\caption{\textbf{Influence calculation for a three-component environment with a 2D action space.} In this illustrative example, component $2$ has high influence, while components $1$ and $3$ have low influence, but for different reasons. Each figure shows a contour-map view of the Q-function over the 2D action space for the same state. \textbf{(a)} The composite Q-function the policy optimizes. The black point is a (near-optimal) policy selection, and the black arrow is the policy gradient. \textbf{(b)}-\textbf{(d)} Each component Q-function.  
\textbf{(e)}-\textbf{(g)} Component-ablated composite Q-function surfaces $Q_{\neg i}$ used to compute influence for each component. Influence is defined by the length of the red line, which is the difference of the policy gradients computed for the composite Q-function $Q$ (black arrow) and for $Q_{\neg i}$ (orange arrow).
\textbf{(e)} Component 1 has low influence because it is out-weighed by component 2. \textbf{(f)} Conversely, component 2 has high influence; without it, the policy would move toward the peak of component 1. \textbf{(g)} While component 3 contributes a large value, it has low influence because the value is nearly uniform; much of its value is received regardless of the action selected.}
\label{fig:influence_explainer}
\end{figure}
\section{Reward component influence}
\label{sec:influence}
 It can be difficult to understand how an agent's predictions interact to affect decision-making. We now introduce the reward \textit{influence} metric, which indicates how much each component contributes to an agent's decision. Intuitively, low influence means that removing a component would not alter decision-making; high influence means that removing it would significantly alter decision-making.

For multi-dimensional continuous actions, we define the \textit{optimal influence}
of component $i$ in state $s$ by how much the optimal policy $\pi^*$ in state $s$ differs from the optimal policy when component $i$ is removed: 
$\mathcal{I}^*_i(s) \triangleq ||\pi^*(s) - \pi_{\neg i}^*(s)||_2$, 
where 
$\pi_{\neg i}^* \triangleq \argmax_{\pi} E \left[ \sum_t^\infty \gamma^t \sum_{j\neq i} w_j R_j(s_t, a_t) | \pi \right]$.\footnote{Discrete-action spaces could use a probability distance measure, but we focus on continuous-action spaces.}

In practice, we apply two approximations to $\mathcal{I}^*_i(s)$ for computational efficiency. 
First, rather than compare the difference of optimal policies, we compare the difference between one step of policy improvement:
$\mathcal{I}_i^\pi(s) \triangleq ||\argmax_a Q^\pi(s, a) - \argmax_a Q_{\neg i}^\pi(s, a)||_2$,
where 
$Q_{\neg i}^\pi(s, a) \triangleq \sum_{j\neq i} w_j Q^\pi_j(s, a)$.  Second, since $\argmax$ is computationally demanding and sensitive to statistical noise, 
we replace the $\argmax_a Q(s, a)$ policy improvement operator with a policy gradient-step operator (typical in RL algorithms like SAC): $\bar{a} + \lambda \nabla_{\bar{a}}Q(s, \bar{a})$, where $\bar{a}$ is a deterministic policy action selection (such as the mode)
and $\lambda \in (0, 1)$ is a step size. 

When taking the difference of the gradient-step operator applied to the $Q^\pi$ and $Q^\pi_{\neg i}$ surfaces, the $\bar{a}$ terms cancel, and $\lambda$ can be factored out of the norm. The result is the \textit{influence} metric (\figref{fig:influence_explainer}):
\begin{equation}
 I^\pi_i(s; \theta) \triangleq \lambda ||\nabla_{\bar{a}} Q^\pi(s, \bar{a}; \theta) - \nabla_{\bar{a}} Q_{\neg i}^\pi(s, \bar{a}; \theta)||_2.
\end{equation}
The raw magnitudes of component influence can be informative by themselves; for example, a sharper Q-function surface leads to larger influence values.
However, to compare influence values across components, we typically compute the \textit{fractional influence} by normalizing the (always non-negative) influence: $\hat{I}_i^\pi(s; \theta) \triangleq \frac{I_i^\pi(s; \theta)}{\sum_j^m I_j^\pi(s; \theta)}$.

We use several techniques to visualize the fractional influence. 
For trajectories, plotting 
influences over timesteps may help identify and explain key decision points (\figref{fig:bw_eval_influence}). When studying an agent's behavior across training, we maintain summary statistics of fractional influence.
We visualize mean fractional influence across all components as a stack plot, sorted so that the component with the maximum influence at the end of training is at the bottom. Figure ~\ref{fig:ll_influence} shows such a diagram for the Lunar Lander environment; we provide figures for all environments in App.~\ref{appendix:influence}.

\newcommand{\influencefigsize}{0.28\columnwidth}
\begin{figure}[t]
    \centering
    \subfigure{
        \includegraphics[width=.95\linewidth]{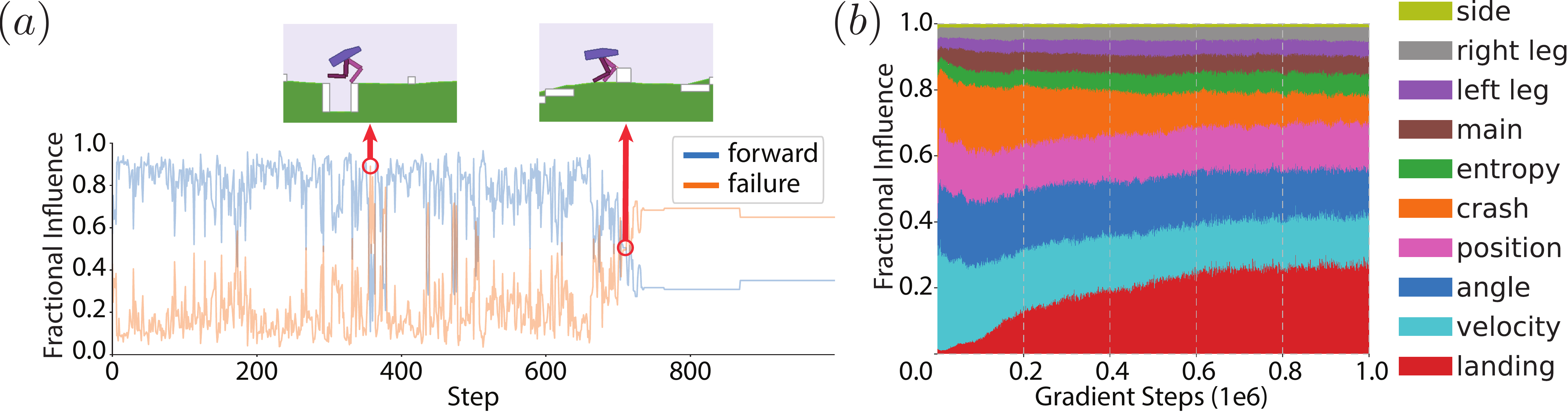}
        \label{fig:bw_eval_influence}
        
    }
    \subfigure{\label{fig:ll_influence}}
    \caption{\textbf{Fractional influence}  \textbf{(a)} Fractional influence for $1$ evaluation trajectory of BWH. 
    For this agent and trajectory, the \textit{forward} component dominates the \textit{failure} component in decision-making, except at two points (red arrows).
    At the first point, the agent was at risk of falling into a pit.
    After the agent stabilizes, the \textit{forward} component returns to being dominant.
    After the second point, the agent becomes unable to move forward. 
    \textit{Failure}'s dominance indicates that if the agent cannot move forward, it prefers standing to falling. If it is possible to surmount this obstacle, more exploration may be required.
    \textbf{(b)} Mean stacked fractional influence across 10 trials of Lunar Lander.
    We observe: (1) the sparse landing reward doesn't meaningfully contribute to decision-making in the earliest part of training; 
    (2) crashing influence starts high and then decreases, suggesting the agent initially learns to avoid crashing; 
    (3) the position, angle, and velocity reward components -- each a shaping reward -- maintain large influence, suggesting possible overreliance on shaping rewards;
    (4) numerous reward components never have much influence; they might be unnecessary or require more attention.}
\end{figure}
\section{Value decomposition: strategies for agent design}
\label{sec:analysis_examples}
Agent design is often a brute-force process of trial and error: when an agent doesn't perform as expected, we choose some aspect of the agent's design to vary, and then we train again. Although this approach can succeed, it can be expensive in both time and computation.

In this section, we illustrate a different approach, showing how a decomposed reward helps break from trial-and-error by encouraging the designer to consider an agent's point of view. In three examples, each using either the LL or BWH environments (Sec.~\ref{sec:sample_env}), we draw on value decomposition tools to: (1) identify learning problems by comparing components' empirical returns to their predictions; (2) constrain component value estimation; (3) identify adverse reward interactions with the influence metric; (4) dynamically re-weight reward components.

These examples are not just meant to demonstrate these specific techniques, nor to demonstrate significant performance improvements on these well-tested benchmark environments (in fact, SAC-D eventually produces good policies for both environments without additional tuning). Rather, the goal is to showcase a general approach to agent design for novel applications in which iteration and failure is costly.
Combined with statistical tools that allow us to reason with small-sample statistics, value decomposition provides a vocabulary to describe an agent's behavior and interpretable tools for targeted interventions. Even these small examples are more intuitive -- and less computationally demanding -- than they would be with a trial-and-error approach. 

For all examples, training parameters are identical to the robustness results of Sec.~\ref{sec:robustness}. 

\newcommand{\llfigsize}{0.35\columnwidth}
\begin{figure}[t]
    \centering
    \subfigure{
        \includegraphics[width=.75\textwidth]{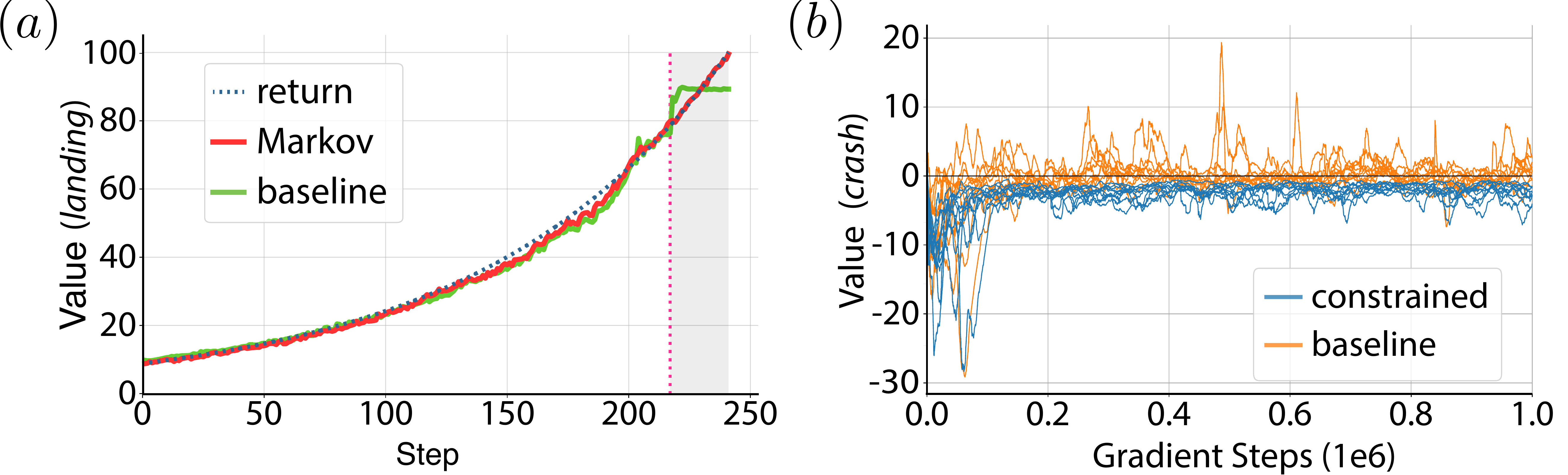}
        \label{fig:ll_predictions}
    }
    \subfigure{
        \label{fig:ll_constraints}
    }
    \caption{\textbf{Remedies for incorrect value predictions.}
    \textbf{(a)} For the \textit{landing} component empirical return (dashed blue), the value prediction (green) for a value function trained with baseline features matches the return well except for a final plateau in the shaded region. Training a value function with the added $V_0^\text{trace}$ feature (red) improves predictions in this region. 
    The dashed pink line indicates the point after which we compute error metrics in App.~\ref{appendix:markov}. \textbf{(b)} Value predictions for the \textit{crash} component with sign constraints (blue lines; using both target and prediction constraints) and without (orange lines). Each line represents an independent trial. Unconstrained estimates can be positive; constrained value estimates keep predictions negative.}
\end{figure}
\textbf{Diagnosing and improving insufficient features:}
We analyze the behavior of an agent trained on Lunar Lander by comparing each component's empirical return to the agent's value predictions, $Q_i$. The agent is trained to land successfully, and generally the component value predictions match their empirical returns well. Curiously, all component predictions are flat near the end of the episode. 
For most components, these flat predictions are a good match for their returns, but not for \textit{landing}.
Investigating the landing dynamics, we found the simulator waits many steps after touchdown before producing a landing reward. During this period, the observations are constant, suggesting the features are inadequate to represent \textit{landing}'s return.
To make the observations Markov, we introduce a new feature to the agent's observations that indicates the duration since the agent's velocity (horizontal, vertical and angular) went to zero: $V_{0}^\text{trace}(t) = V_{0}^{\text{steps}}(t)/c$, where $V_{0}^{\text{steps}}(t)$ is the number of time steps since all the velocities dropped below a threshold and $c$ is a fixed normalizing constant. With this feature, post-landing predictions show a marked improvement (see Figure~\ref{fig:ll_predictions} and Appendices~\ref{appendix:detail},~\ref{appendix:markov}).

\textbf{Diagnosing and mitigating value errors using domain knowledge:}
Lunar Lander's design makes clear that certain reward components are always non-positive (\textit{crash, main, side}) while others are always non-negative (\textit{landing}).
However, we observe that the agent's decomposed value predictions do not always match these bounds. In particular, value predictions of \textit{crash} have a tendency to oscillate about $0$ after the agent learns to land. 
Value decomposition allows us to explicitly enforce a sign constraint on \textit{crash} (Figure~\ref{fig:ll_constraints}; see App.~\ref{app:constraint} for details). In this particular example, constraints do not alter policy learning performance, but the resulting predictions are easier to interpret, and the same technique may improve performance in more complex environments.

\textbf{Mitigating an adverse reward with component weight scheduling:}
Under the BWH reward function, a random policy is far more likely to experience an unsuccessful outcome (falling over) than a successful one (walking forward). This bias can inhibit agent exploration early in training, causing an agent's policy to fall into a local minimum (the agent stands still).
Here, we diagnose this dynamic using component predictions and influence metrics, and remedy it by varying a single component weight during training.

We find that the \textit{failure} component's fractional influence dominates the \textit{forward} component's fractional influence early in training, and that this relationship reverses as agent performance improves (\figref{fig:bwh_influence}).
The \textit{forward} component's near-zero value predictions (\figref{fig:bwh_value_predictions}) early in learning indicate that in many episodes, the agent neither moves nor expects to move (\figref{fig:bwh_score_dist}). Early dominance by the easy-to-observe \textit{failure} suggests that it is inhibiting exploration. 

To mitigate this problem, we 
vary the \textit{failure} component weight, $w_\text{failure}$, from $0.01$ to $1$ over the course of learning according to the schedule described in App.~\ref{appendix:annealing}.
This schedule significantly increases the agents' \textit{forward} progress (\figref{fig:bwh_score_dist}),
and accelerates learning.

\section{Related work}
\label{sec:related}
Our work builds upon earlier studies of value decomposition in the explainable reinforcement learning (XRL) literature: DrQ~\cite{Juozapaitis2019ExplainableRL}, DrSARSA~\cite{Russell2003QDecompositionFR}, and HRA~\cite{HybridRA}.\footnote{For a broad overview of XRL, we direct the reader to Heuillet et al.~\cite{heuillet2021explainability}.} Like our approach, these methods learn separate value function estimates for each term of a linear reward function. DrQ and HRA are off-policy Q-learning-like methods, while DrSARSA is an on-policy method. HRA does not converge to a globally optimal policy, as each value function is only {\em locally}-optimal for the reward component it measures. The RDX and MSX metrics proposed by DrQ could be adopted in our setting, but the influence metric is easier to use with continuous-actions, and aggregate with summary statistics. Our approach improves upon these prior contributions by: (1) working in continuous-action and discrete-action environments; (2) allowing for and demonstrating dynamic re-weighting of reward components during training; and (3) being applicable to a family of actor-critic methods. While these approaches and ours explore using value decomposition for explainability, these works focus on describing why an agent took certain actions to users, whereas we focus on how to use value decomposition to diagnose and remedy learning challenges.

The Horde architecture~\cite{sutton2011horde} and UVFA~\cite{Schaul2015UniversalVF}, methods for multi-goal learning, also employ multiple value functions (one for each goal). UVFAs use a parameterized continuous space of goals, while Horde makes multiple discrete value function predictions. The value functions in our work are conditioned on a policy that optimizes the global reward, whereas in Horde and UVFAs the value functions are conditioned on independent policies that greedily optimize local goals (similar to HRA). Additionally, in our approach, the composite value function can be recovered from the components. Other value decomposition work includes Empathic Q-learning~\cite{laroche2017multi} and Orchestrated Value Mapping~\cite{fatemi2022orchestrated}. The primary difference between our work these other approaches regards the motivation and application of value decomposition. These works focus on how value decomposition can improve sample efficiency, generalization, or other aspects of the core learning problem, rather than how to diagnose and remedy problems.

Mathematically, value decomposition bears resemblance to work on successor features, which has focused primarily on state representation and transfer learning \cite{barreto2017successor,Borsa2019UniversalSF,Barreto2019TheOK,Grimm2019LearningIR}. 
Methods for multi-objective RL~\cite{Roijers,hayes_practical_2022} learn sets of policies, each with a distinct linear weight over multiple reward objectives. Our work also recovers the value function for different linear reward preferences, but uses this capability to diagnose behavior and learning problems rather than to learn multiple policies.

\begin{figure}[tp]
    \centering
    \subfigure{\includegraphics[width=0.95\columnwidth]{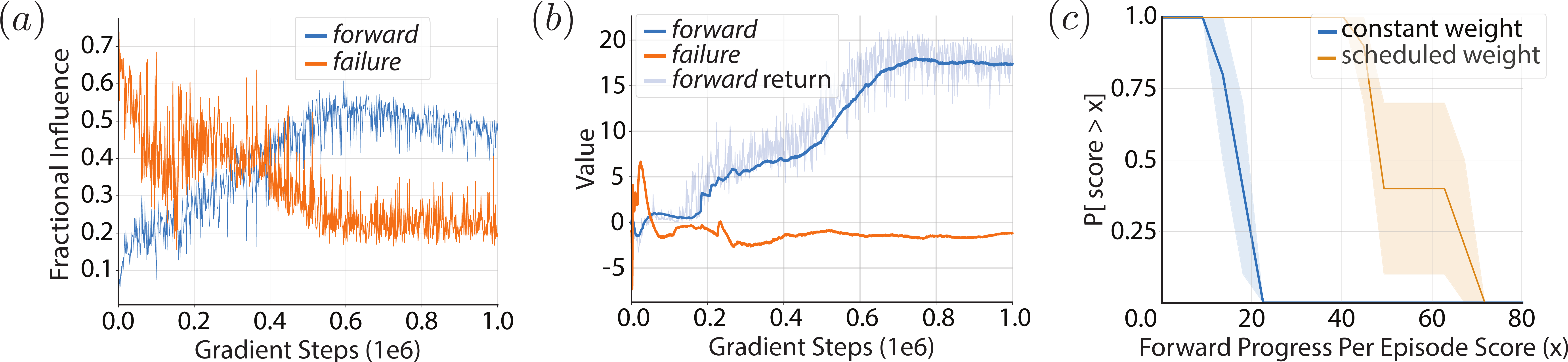}\label{fig:bwh_influence}}
    \subfigure{\label{fig:bwh_value_predictions}}
    \subfigure{\label{fig:bwh_score_dist}}
    \caption{\textbf{Remedy for an adverse reward}
    \textbf{(a)} 
    The initially low fractional influence of \textit{forward} (blue curve) indicates it may be initially inhibited by failure penalties with higher influence.
    \textbf{(b)} The \textit{forward} value predictions and returns (blue curves) are near zero early in training, indicating the agent does not expect to move forward. \textbf{(c)} An agent trained with its failure weight scheduled from a value close to zero up to one (orange curve) improves early forward progress compared to an agent with a constant failure weight (blue curve). Forward progress is measured as the average forward progress return from all evaluation episodes of the first 200k gradient steps. Shaded areas represent $95\%$ bootstrap confidence intervals from 10 trials.
    }
    \label{fig:bwh_grid}
\end{figure}

\section{Concluding remarks}
\label{sec:conclusion}

We have argued that the iterative design of reinforcement learning agents can be improved through the use of \textit{value decomposition}, in which we keep individual reward components separate and learn value estimates of each. 
We provided a simple prescription for deriving value decomposition algorithms from actor-critic methods, and applied it to SAC to derive the SAC-D algorithm. Combined with the CAGrad method, SAC-D meets or exceeds SAC's performance in all environments we tested.
We introduced the \textit{influence} metric, and demonstrated its use in measuring each reward component's effect on an agent's decisions. 
Finally, we provided several examples of how value decomposition can diagnose and remedy agent learning problems.

Although value decomposition is a simple and broadly applicable tool,
we note the following limitations. (1) Our method requires a composite reward function of multiple components. (2) We only study linear reward decomposition. (3) Component predictions only tell you the agent expectations under the single learned policy; changing the weights doesn't tell you what to expect after re-optimizing the policy for them. 
(4) Our approach benefits most with methods that learn Q-functions. Methods that optimize the policy with empirical returns have a weaker link between agent expectations (component Q-functions) and policy decisions. The influence metric also requires a Q-function model.

Our method presents the same societal benefits and risks as other RL methods. However, we believe this technology has a net beneficial impact, because making agent decisions more introspectable enables developers to catch problematic behavior before deploying the technology. One particular concern, however, is that value estimates represent an agent's beliefs, not ground truth; this should be kept in mind when such predictions are used in real-world decision-making, as they may reinforce biases or lead to incorrect conclusions.

\bibliographystyle{abbrvnat}
\bibliography{main}

\newpage
\appendix
\section*{Appendix}

\section{Value decomposition of linear reward functions}
The following proof shows that composite \qfunc{} can be recovered from the component \qfuncs{} of a linear reward function.
\begin{theorem}
\label{thm:valdecomp}
Let the reward function $R(s, a)$ of an MDP be a linear combination of a finite number of $\ncomponents$ components: 
$$
R(s, a) \triangleq \sum_i^\ncomponents w_i R_i(s, a),
$$ 
where $w_i \in \mathbb{R}$ is a weight for the $i$th component and $R_i(s, a) \rightarrow \mathbb{R}$ defines the $i$th reward component for state-action pair $s$ and $a$. 

Let $Q^\pi_i$ be the component \qfunc{} for the $i$th reward component under policy $\pi$: 
$$
Q^\pi_i(s, a) \triangleq E\left[\sum_{t=0}^\infty \gamma^t R_i(s_t, a_t)| s_0 = s, a_0 = a\right].
$$

Then the composite \qfunc{} 
\begin{align*}
Q^\pi(s, a) \triangleq & E_\pi\left[\sum_{t=0}^\infty \gamma^t R(s_t, a_t) | s_0 = s, a_0 = a\right] \\
=&  \sum_i w_i Q_i^\pi(s, a).
\end{align*}
\end{theorem}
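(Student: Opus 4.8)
The plan is to prove the identity by expanding the definition of the composite $Q$-function, substituting in the linear decomposition of the reward, and then interchanging the (finite) sum over components with the expectation and the infinite discounted sum. The argument is essentially a bookkeeping exercise in linearity, so I would present it as a short chain of equalities rather than an elaborate construction.

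First I would start from the definition
\begin{equation*}
Q^\pi(s,a) = E_\pi\left[\sum_{t=0}^\infty \gamma^t R(s_t,a_t)\,\middle|\, s_0=s,\, a_0=a\right],
\end{equation*}
and substitute $R(s_t,a_t) = \sum_{i=1}^\ncomponents w_i R_i(s_t,a_t)$. Next I would pull the finite sum over $i$ outside the infinite sum over $t$ and outside the expectation, using linearity of expectation together with the fact that the component weights $w_i$ are deterministic constants. This yields $\sum_{i=1}^\ncomponents w_i\, E_\pi\left[\sum_{t=0}^\infty \gamma^t R_i(s_t,a_t)\,\middle|\, s_0=s,\,a_0=a\right]$, and recognizing the inner expectation as exactly the definition of $Q_i^\pi(s,a)$ completes the proof.

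The only real subtlety — and the step I would flag as the main obstacle — is justifying the interchange of the sum over $i$ with the infinite sum over $t$ and the expectation. Since there are only finitely many components, swapping $\sum_i$ with $E_\pi[\cdot]$ is immediate from linearity of expectation with no convergence hypotheses needed. Swapping $\sum_i$ past the infinite series $\sum_t \gamma^t(\cdot)$ is likewise fine for a finite index set, provided each series converges; I would note that convergence is guaranteed under the standard assumption that rewards are bounded and $\gamma < 1$ (as stipulated in the paper's MDP setup), so each $Q_i^\pi$ and $Q^\pi$ is a well-defined finite quantity and the rearrangements are valid. If one wanted to be fully rigorous one could invoke dominated convergence or Fubini–Tonelli, but for a bounded-reward, discounted MDP this is overkill and a one-line remark suffices.

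In short, the proof is: expand, substitute, swap finitely many summands out, and re-collapse each term into $Q_i^\pi$. I expect the write-up to be three or four displayed lines, with a parenthetical remark pointing to the boundedness of rewards and $\gamma<1$ to license the interchange.
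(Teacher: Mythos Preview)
Your proposal is correct and follows essentially the same route as the paper's own proof: expand the definition, substitute the linear reward decomposition, interchange the finite component sum with the discounted infinite sum and the expectation via linearity, and identify each resulting term as $Q_i^\pi$. The only difference is that you add an explicit remark about bounded rewards and $\gamma<1$ to justify the interchange, which the paper leaves implicit.
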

\begin{proof}
In the following proof, we use the annotation ``By def.'' to mean substitution of a definition; ``Distrib.'' to mean by the distributive property; ``C.A.'' to mean by the commutativity and associativity property; and ``Lin. E.'' to mean by the linearity of expectation property.
For notational simplicity, the expectation $E_\pi\left[ ... \right]$ implies the conditional expectation $E_\pi\left[ ... \right | s_0 = s, a_0 = a]$.
\begin{align*}
Q^\pi(s, a) &\triangleq E_\pi\left[\sum_{t=0}^{\infty}\gamma^t R(s_t, a_t) \right] & \\
&= E_\pi\left[\sum_{t=0}^{\infty}\gamma^t \sum_i^k w_i R_i(s_t, a_t) \right] &  \text{By def.} \\
&= E_\pi\left[\sum_{t=0}^{\infty} \sum_i^k \gamma^t w_i R_i(s_t, a_t) \right] &  \text{Distrib.} \\
&= E_\pi\left[\sum_i^k \sum_{t=0}^{\infty} \gamma^t w_i R_i(s_t, a_t) \right] &  \text{C.A.} \\
&= E_\pi\left[\sum_i^k w_i \sum_{t=0}^{\infty} \gamma^t R_i(s_t, a_t) \right] &  \text{Distrib.} \\
&= \sum_i^k E_\pi\left[ w_i \sum_{t=0}^{\infty} \gamma^t R_i(s_t, a_t) \right] &  \text{Lin. E.} \\
&= \sum_i^k w_i E_\pi\left[ \sum_{t=0}^{\infty} \gamma^t R_i(s_t, a_t) \right] &  \text{Lin. E.} \\
&= \sum_i^k w_i Q_i^\pi(s, a) &  \text{By def.}\\
\end{align*}
\end{proof}

\section{Environment reward components}
\label{appendix:environment}

The decomposed reward components used in the various environments are listed in this section. All of these are internally implemented in Gym~\cite{brockman2016openai}, but only the aggregated scalar rewards are exposed by default. To this end, we implemented a modified version to expose these decomposed reward functions. For Ant-v3 and Humanoid-v3, there is an issue with contact calculation due to the compatibility with the latest open-sourced version of MuJoCo simulator (\url{https://github.com/openai/gym/issues/1541}). Therefore, we removed the contact cost component in our experiments.

\textbf{LunarLander-v2.}
\begin{itemize}
    \item \texttt{main}: cost for using the main engine.
    \item \texttt{side}: cost for using the side engines.
    \item \texttt{crash}: penalty for crashing.
    \item \texttt{landing}: reward for landing.
    \item \texttt{left\_leg}: shaping reward for the left leg contacting the ground.
    \item \texttt{right\_leg}: shaping reward for the right leg contacting the ground.
    \item \texttt{angle}: shaping penalty for being oriented away from the vertical.
    \item \texttt{position}: shaping penalty to encourage the lander to come down and to the center of the landing pad.
    \item \texttt{velocity}: shaping penalty to drive the lander to prefer low speed.
\end{itemize}

\textbf{Hopper-v3.}
\begin{itemize}
    \item \texttt{forward}: forward progress along the movement axis.
    \item \texttt{control\_cost}: cost of actuator actions.
    \item \texttt{healthy}: constant alive bonus.
\end{itemize}

\textbf{BipedalWalker-v3.}
\begin{itemize}
    \item \texttt{forward}: forward progress.
    \item \texttt{head}: steadiness of the head.
    \item \texttt{control\_cost}: cost of actuator actions.
    \item \texttt{failure}: penalty at failure.
\end{itemize}

\textbf{BipedalWalkerHardcore-v3.}
\begin{itemize}
    \item \texttt{forward}: forward progress.
    \item \texttt{head}: steadiness of the head.
    \item \texttt{control\_cost}: cost of actuator actions.
    \item \texttt{failure}: penalty at failure.
\end{itemize}

\textbf{HalfCheetah-v3.}
\begin{itemize}
    \item \texttt{forward}: forward progress along the movement axis.
    \item \texttt{healthy}: constant alive bonus.
\end{itemize}

\textbf{Walker2d-v3.}
\begin{itemize}
    \item \texttt{forward}: forward progress along the movement axis.
    \item \texttt{control\_cost}: cost of actuator actions.
    \item \texttt{healthy}: constant alive bonus.
\end{itemize}

\begin{samepage}
\textbf{Ant-v3.}
\begin{itemize}
    \item \texttt{forward}: forward progress along the movement axis.
    \item \texttt{control\_cost}: cost of actuator actions.
    \item \texttt{healthy}: constant alive bonus.
\end{itemize}
\end{samepage}

\textbf{Humanoid-v3.}
\begin{itemize}
    \item \texttt{forward}: forward progress along the movement axis.
    \item \texttt{control\_cost}: cost of actuator actions.
    \item \texttt{healthy}: constant alive bonus.
\end{itemize}

\section{Experiment details}
\label{appendix:detail}

All algorithms are trained in a distributed manner with separated processes of a rollout worker that collects experiences and a trainer that updates the neural network parameters asynchronously.
We used Reverb~\cite{reverb} to implement the experience replay buffer.

The policy network outputs the vector mean ($\mu$) and (diagonal) log standard deviation ($\log(\sigma)$) of a squashed Gaussian distribution whose output is constrained to $(-1, +1)$. 
During data collection actions are sampled by first sampling the random (unsquashed) vector $Z_t \sim N(\mu(S_t),\sigma(S_t))$, and then applying $\tanh$ to that to get the action: $A_t \gets \tanh(Z_t)$. The action was scaled to the appropriate domain before being executed in the environment for environments with different action domains than $(-1, +1)$.
The trained policies were evaluated for 10 episodes every 1000 gradient steps. During an evaluation episode, actions were drawn deterministically from the policy by using the output mean $Z_t \gets \mu(S_t)$, instead of sampling.

In SAC-D and SAC-D-Naive training, the gradients of the hidden layers are divided by the size of reward dimension. For SAC-D-CAGrad training, we implemented the CAGrad part based on the publicly available code\footnote{https://github.com/Cranial-XIX/CAGrad} released by the authors.
We found that the scipy's \texttt{minimize} function used for the inner optimization is sensitive to the gradient scale.
Therefore the gradients are normalized by an averaged norm during the inner optimization.
Table~\ref{tab:hyperparams} shows the hyperparameters used in the experiments.

\begin{table}[ht]
\centering
\caption{Hyperparameters used in robustness experiments}
\begin{tabular}{@{}llr@{}} \toprule
    Algorithm            & Hyperparameter & Value \\
    \midrule 
    SAC & Actor learning rate   & 3e-4 \\
                         & Critic learning rate  & 3e-4 \\
                         & Discount factor       & 0.99 \\
                         & Mini-batch size       & 256 \\
                         & Entropy target        & $-|A|$ \\
                         & Target update rate    & 5e-3 \\
                         & Hidden layers         & [256, 256] \\
                         & Activation            & ReLU \\
                         & Replay buffer size    & 1000000 \\
                         & Optimizer             & Adam~\cite{kingma2014adam} \\
                         & Steps until timeout   & 1000 \\
    SAC-D-CAGrad         & $c$~\cite{cagrad}     & 0.5 \\
    \bottomrule
\end{tabular}
\label{tab:hyperparams}
\end{table}

\subsection{Computation}
\label{appendix:computation}

Our computational resources were a custom cluster of machines deployed to allow distributed training in which data collection and training are performed on different machines, the \textit{rollout worker} and \textit{trainer} respectively. The rollout worker used a single CPU with 2 GB of RAM (AWS c5.2xlarge: Intel Xeon, 3.9 GHz) while the trainer used 8 CPU cores with 8 GB of RAM (AWS p3.2xlarge: Intel Xeon E5-2686 v4 processors, 2.3 GHz).

Computing the mean training steps per second for Bipedal Walker Hardcore we see that SAC-D is moderately slower than SAC, but SAC-D-CAGrad is significantly slower (Table~\ref{tab:compute}).
Applying CAGrad slows down the rate of computation because on each step it solves an additional optimization step in order to compute the gradient. For the values reported in Table~\ref{tab:compute} we use the default minimization algorithm from SciPy \citep{2020SciPy-NMeth} for this computation, but other approaches might be used to improve the computational efficiency.

\begin{table}[ht]
    \centering
    \caption{Training Steps/s for different algorithms on BipedalWalkerHardcore-v3}
    \begin{tabular}{@{}lr@{}} \toprule
        Variant & Avg Training Steps/s (std) \\
        \midrule
        SAC &  174 (13) \\
        SAC-D & 142 (8) \\
        SAC-D-CAGrad & 26 (2)\\
        \bottomrule
    \end{tabular}
    \label{tab:compute}
\end{table}

\section{Training curves for robustness experiment}
\label{appendix:training_curves}
In~\figref{fig:robustness} we show the training curves for all environments for SAC, SAC-D-Naive, SAC-D, and SAC-D-CAGrad. In most cases, SAC, SAC-D, and SAC-D-CAGrad resulted in final policies with comparable performance, with SAC-D-Naive sometimes under performing the others. Of these, the results on Ant-v3 and BipedalWalkerHardcore-v3 are the most atypical. On Ant-v3, SAC-D performed significantly worse than the others. See App.~\ref{appendix:ant_analysis} for a larger investigation of this result. On BipedalWalkerHardcore-v3, we found that SAC-D-CAGrad significantly outperformed the other methods. Because we trained agents using an asynchronous training server, one possible reason for this result is that computational differences in the experiments resulted in different data distributions, rather than the improved performance being due to the differences in the algorithms. We trained SAC-D-CAGrad and SAC synchronously (with one gradient step per environment step) on BipedalWalkerHardcore-v3 to rule out this factor. While we found that asynchronous training tended to be more effective than synchronous training, SAC-D-CAGrad still outperformed SAC after 1 million gradient steps (see \figref{fig:sync_training}). These results suggest that value decomposition with CAGrad may in some cases benefit learning compared to baseline algorithms without value decomposition. We leave investigating this question further for future work.
\begin{figure}[h]
    \centering
    \subfigure[LunarLander-v2]{\includegraphics[width=0.3\columnwidth]{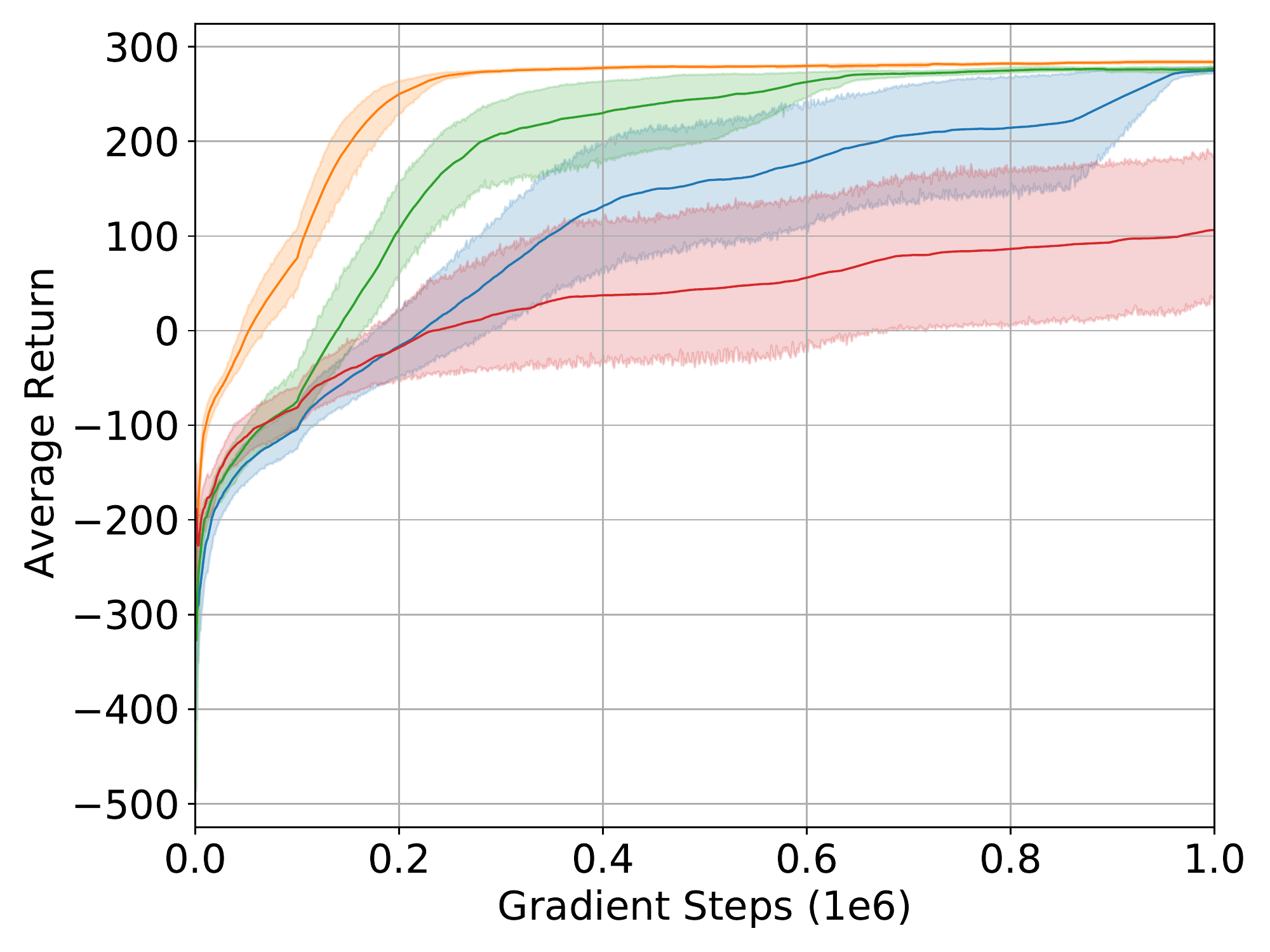}}
    \subfigure[BipedalWalker-v3]{\includegraphics[width=0.3\columnwidth]{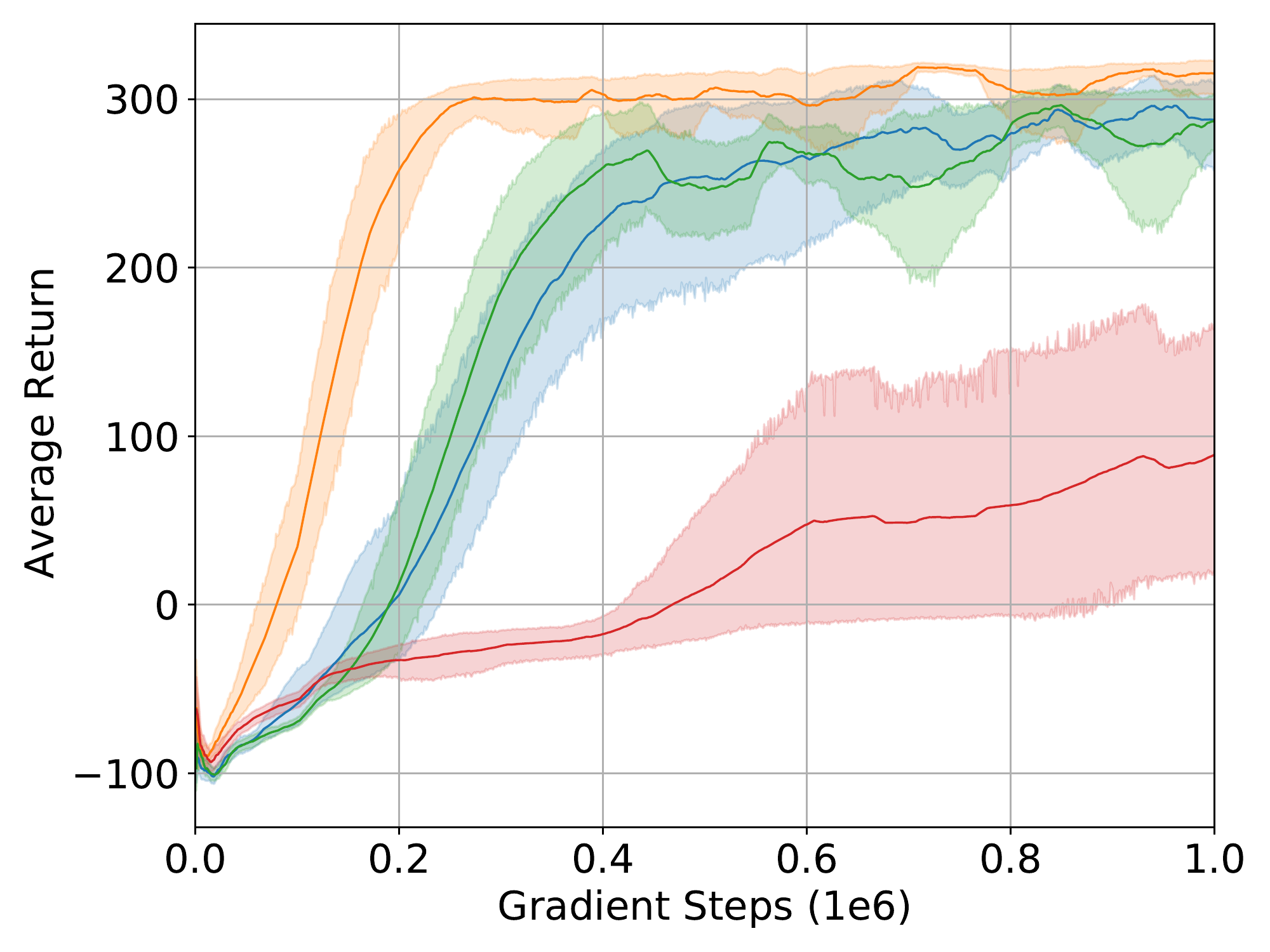}} \\
    \subfigure[BipedalWalkerHardcore-v3]{\includegraphics[width=0.3\columnwidth]{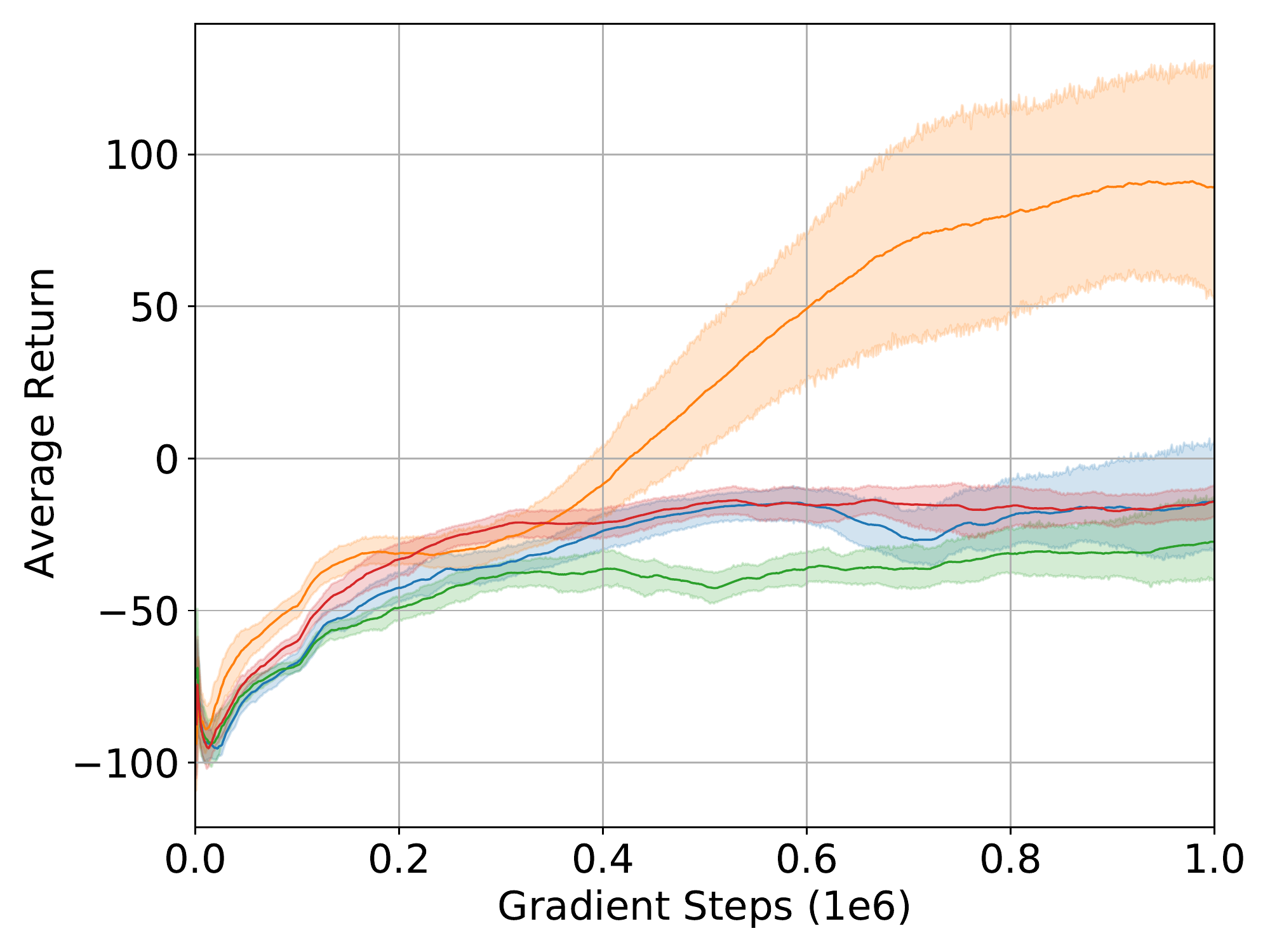}}
    \subfigure[Hopper-v3]{\includegraphics[width=0.3\columnwidth]{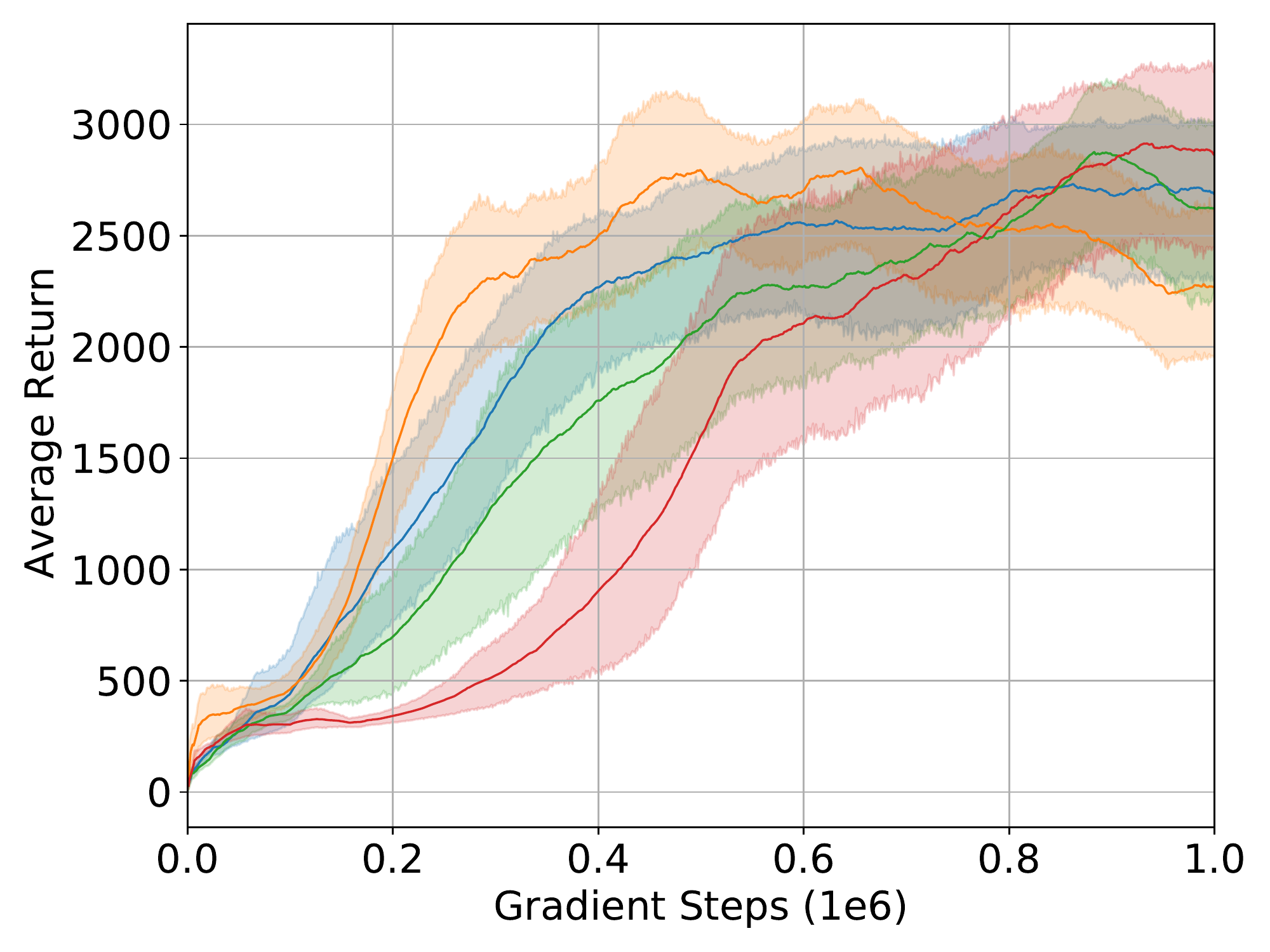}} \\
    \subfigure[HalfCheetah-v3]{\includegraphics[width=0.3\columnwidth]{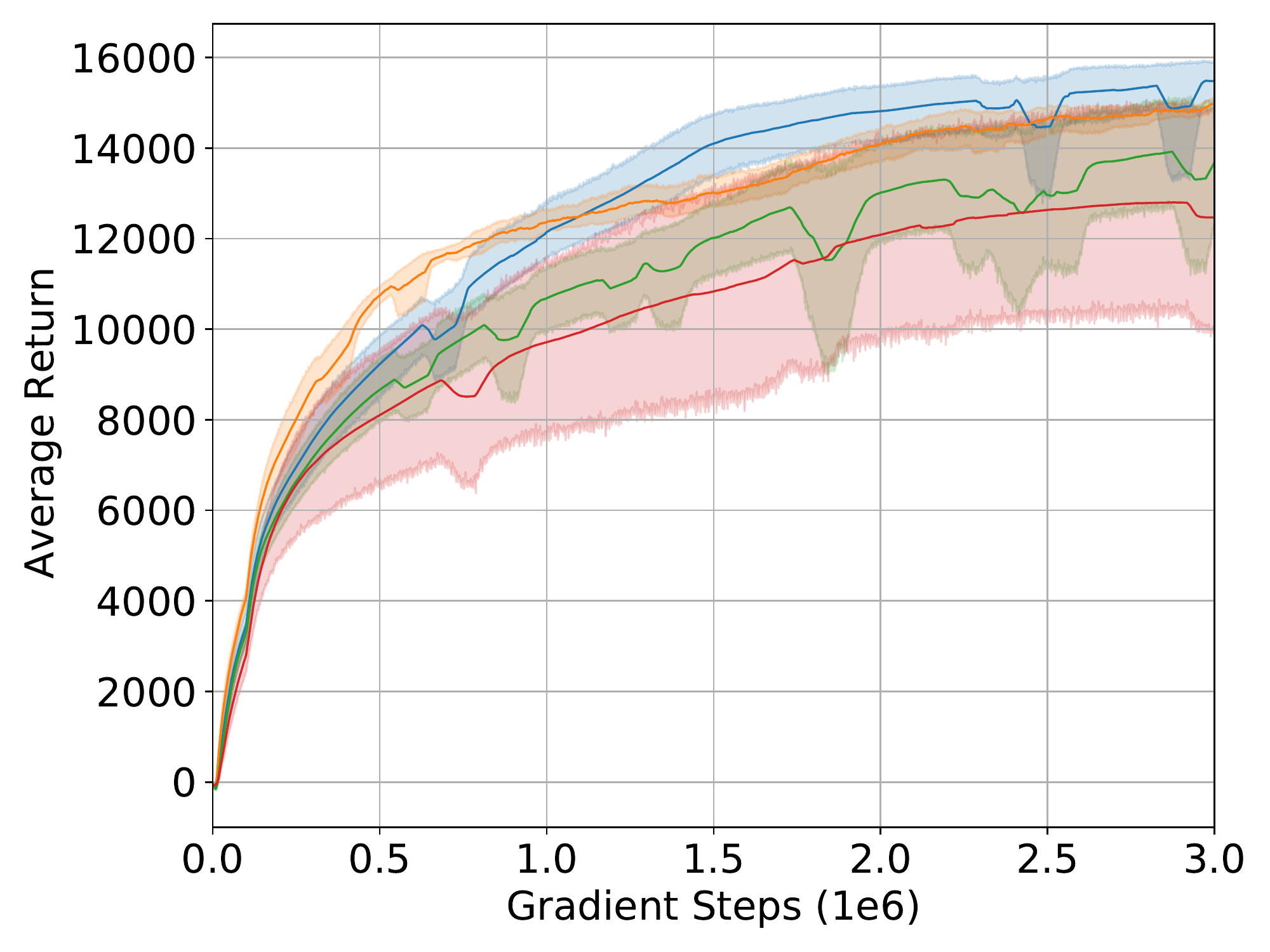}}
    \subfigure[Walker2d-v3]{\includegraphics[width=0.3\columnwidth]{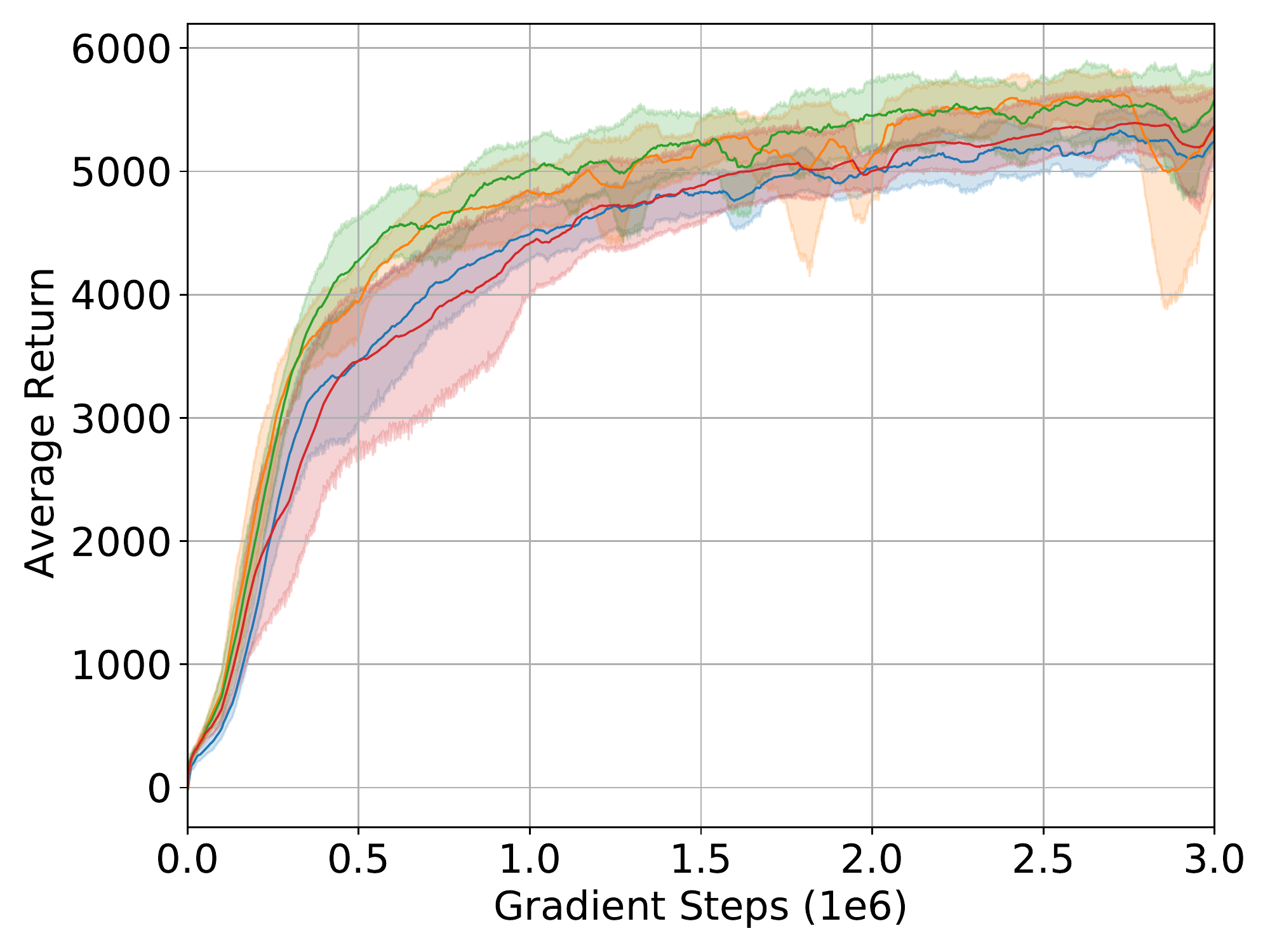}} \\
    \subfigure[Ant-v3]{\includegraphics[width=0.3\columnwidth]{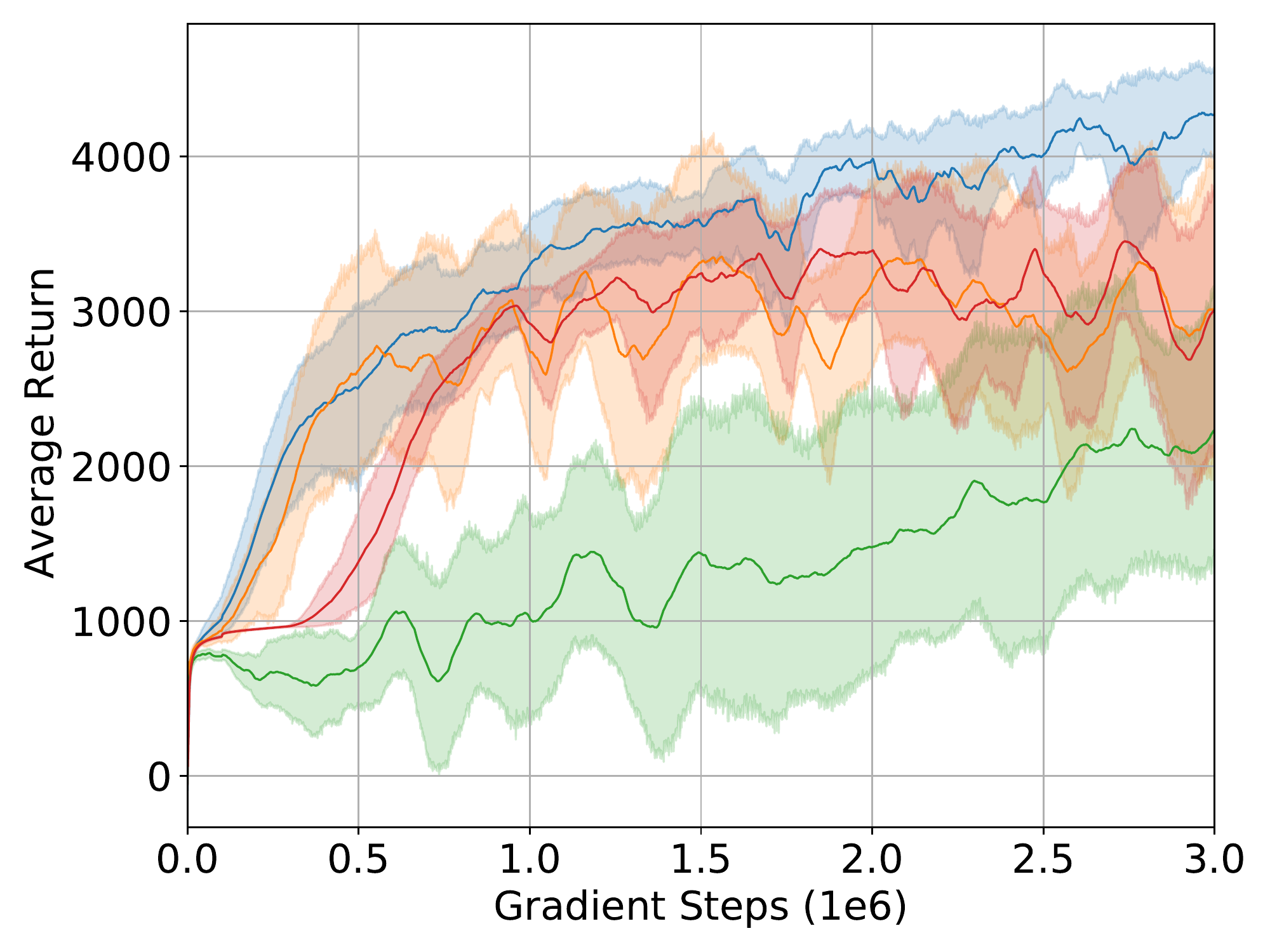}}
    \subfigure[Humanoid-v3]{\includegraphics[width=0.3\columnwidth]{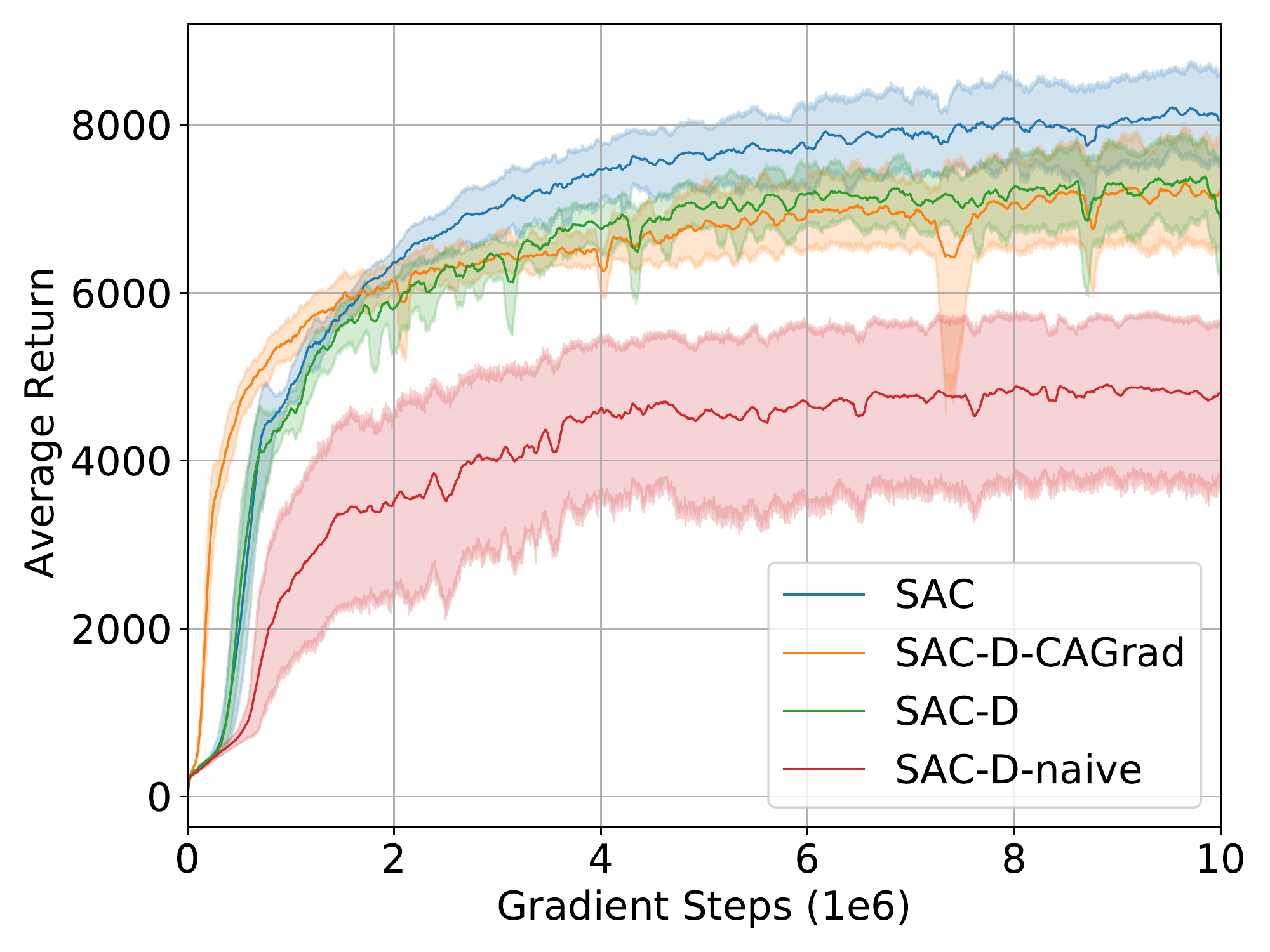}}
    \caption{\small Training curves on continuous-control benchmarks. The solid lines represent the average episode returns over 10 trials. The shaded region represents the confidence interval. The lines are uniformly smoothed for visual clarity.}
    \label{fig:robustness}
\end{figure}
\begin{figure}[h]
\centering
\includegraphics[width=0.3\columnwidth]{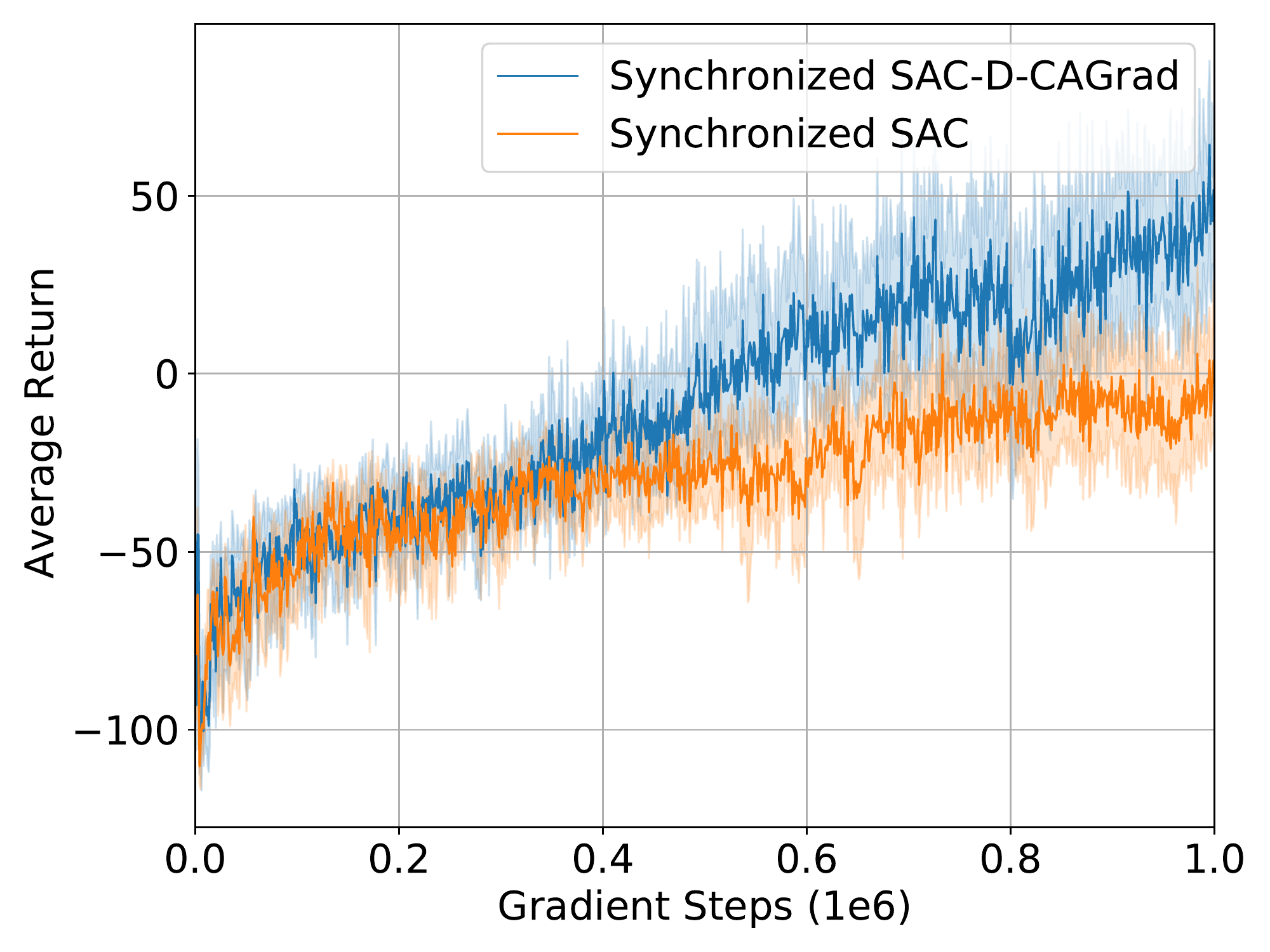}
\caption{\small Average training curves of synchronized SAC and SAC-D-CAGrad on BipedalWalkerHardcore-v3 over ten trials. SAC-D-CAGrad still outperforms SAC in the synchronized setting after 1 million gradient steps, suggesting there may sometimes be an algorithmic benefit to using SAC with value decomposition.}
\label{fig:sync_training}
\end{figure}

\section{Performance Analysis of SAC-D on Ant-v3}
\label{appendix:ant_analysis}

\begin{figure}[ht]
    \centering
    \subfigure[Frequency of termination]{\includegraphics[width=0.3\columnwidth]{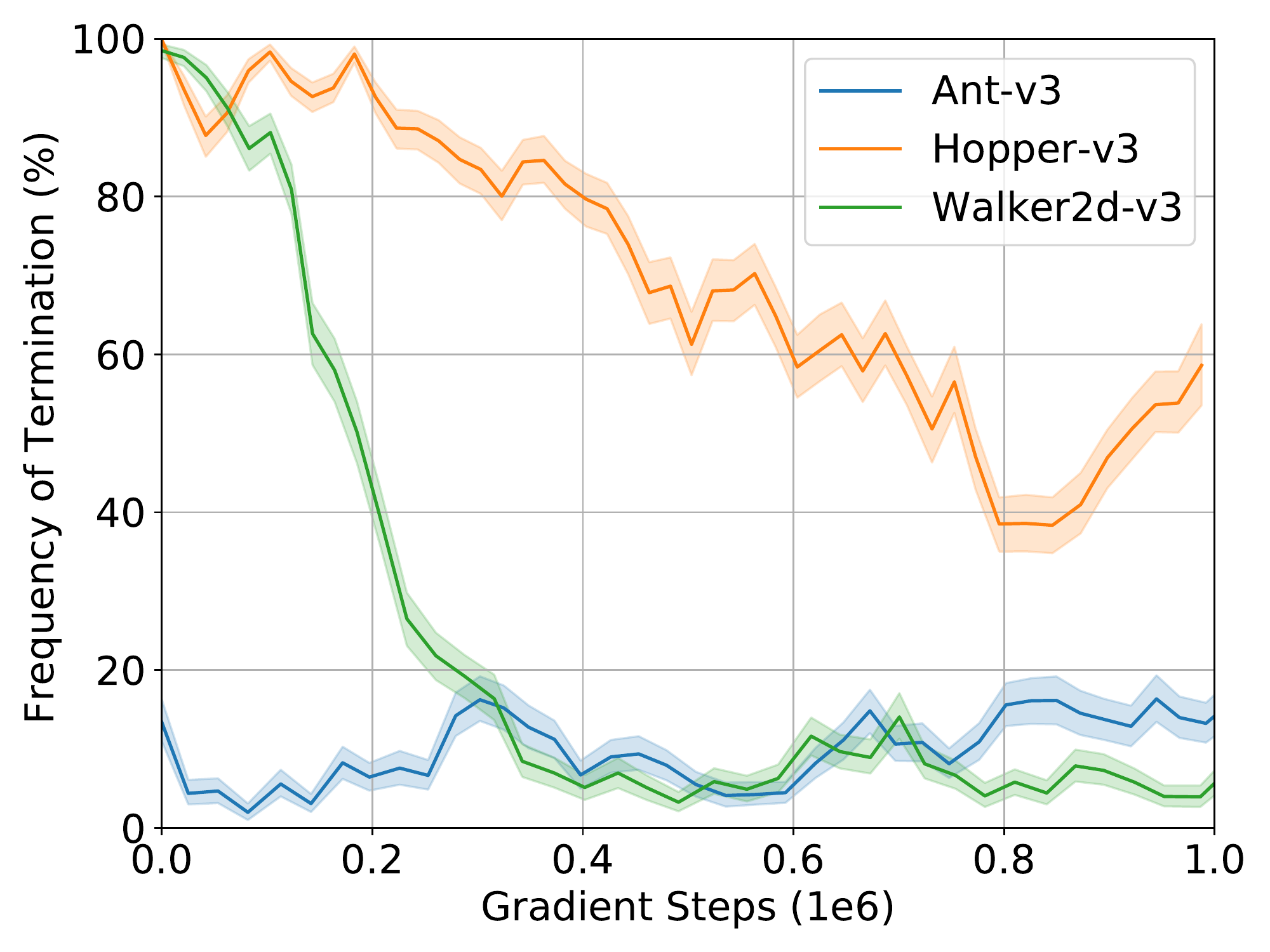}\label{fig:termination}}
    \subfigure[Performance without terminal]{\includegraphics[width=0.3\columnwidth]{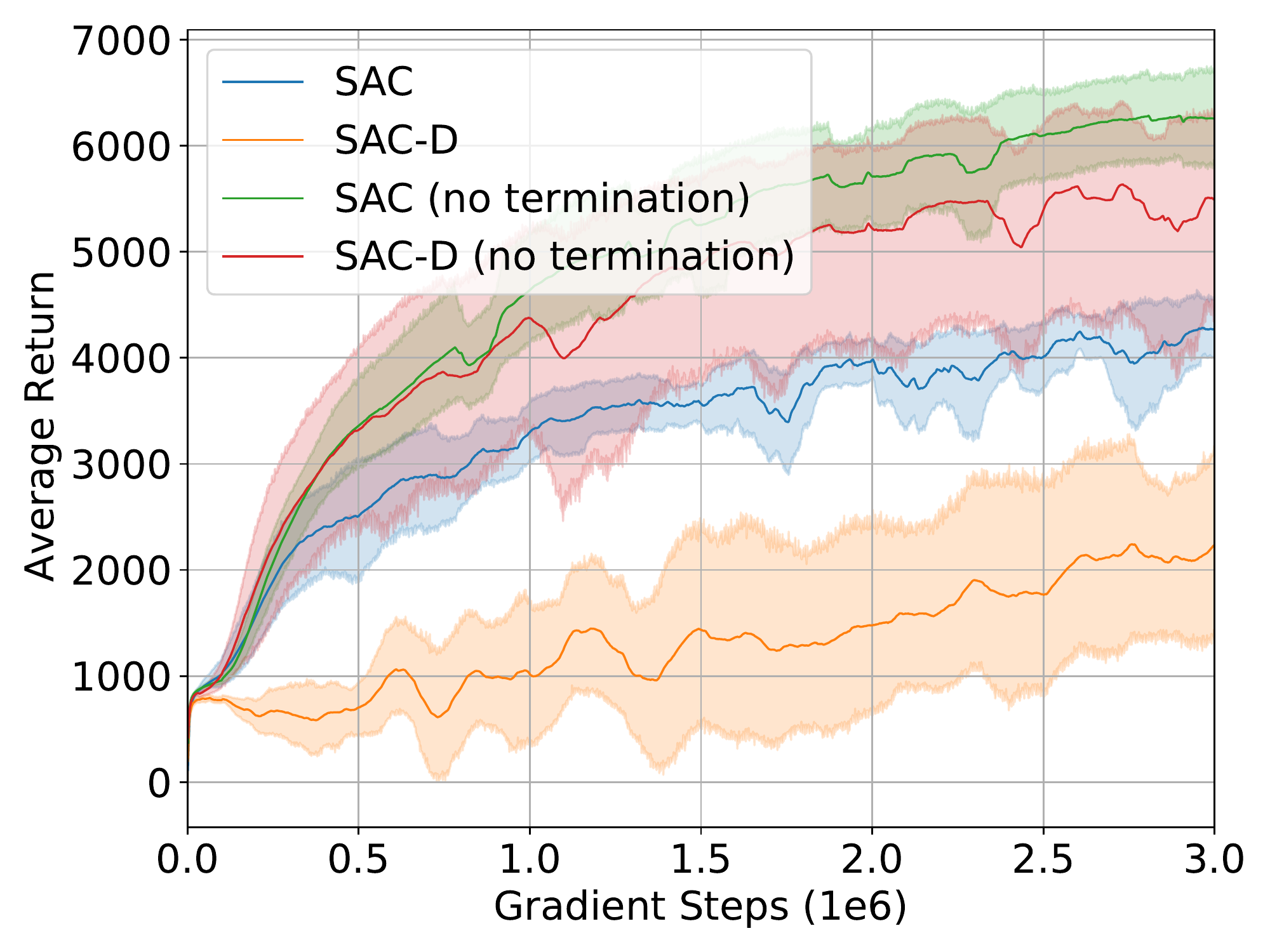}\label{fig:non_terminal}}
    \subfigure[Critic loss]{\includegraphics[width=0.3\columnwidth]{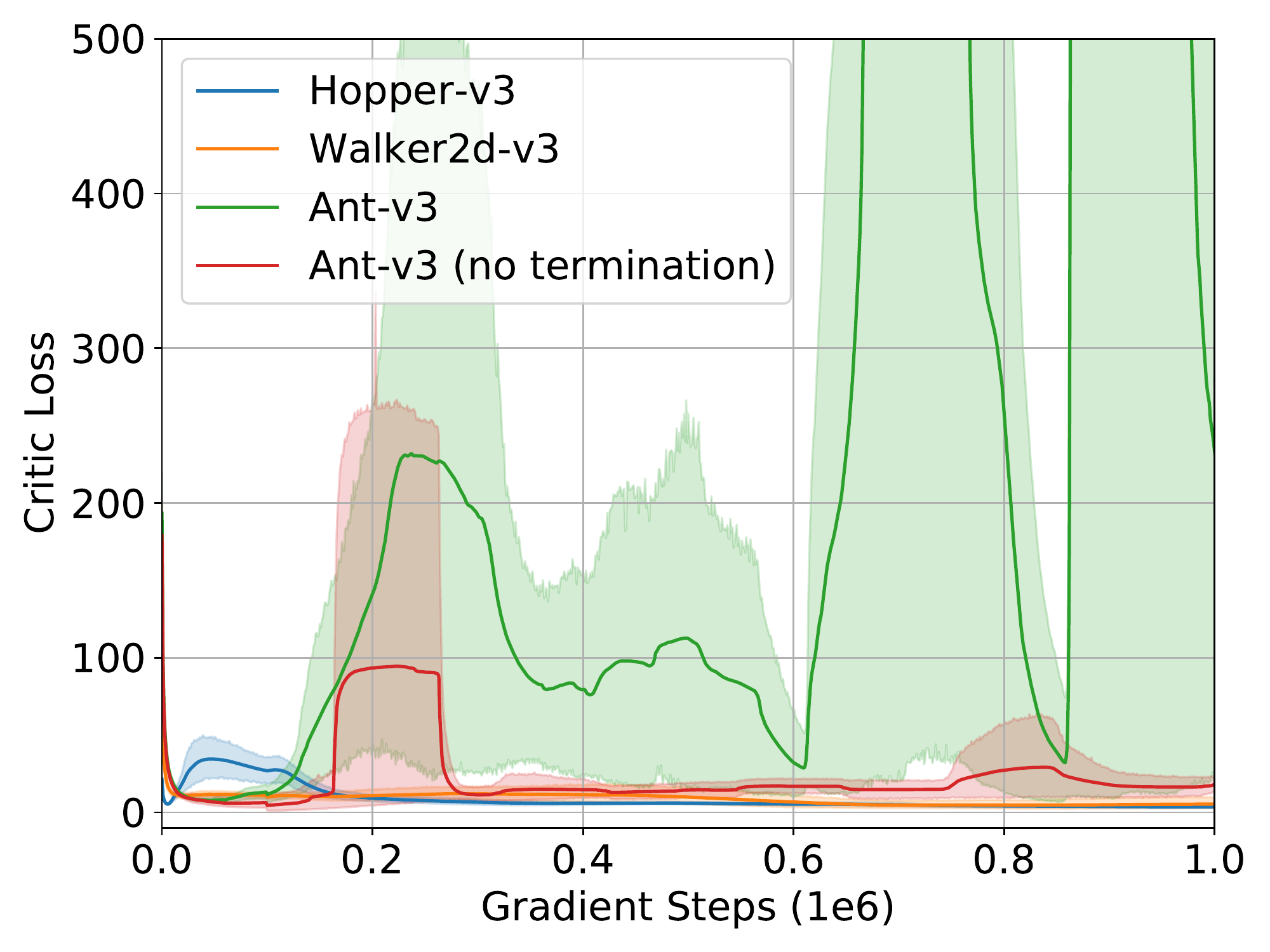}\label{fig:critic_loss}}
    \caption{\textbf{(a)} Frequency of environmental terminations during training. The metrics data is collected by training SAC-D for 1M steps and averaged over 10 trials. The shaded region represents the confidence interval. The lines are uniformly smoothed for visual clarity. There are significantly fewer terminated episodes in Ant-v3 \textbf{(b)} Training curves on Ant-v3 with no termination conditions. The performance gap between SAC-D and SAC is significantly smaller without environment termination. \textbf{(c)} Critic loss of SAC-D training. The critic loss in Ant-v3 is enormously large. By removing the terminations, the critic loss is substantially suppressed.}
\end{figure}
In our results, SAC-D performed similarly or only slightly worse than SAC on most environments, with the exception of Ant-v3 in which it performed significantly worse than SAC. While using CAGrad significantly improved results of SAC-D on Ant-v3, we wanted to further analyze why results were so different on this environment. 

From our investigation, we found that the abnormal behavior was due to poor component value predictions of rare environment terminations that led to large critic losses. Unlike other environments, in Ant, it is uncommon for even an untrained policy to terminate in failure. Consequently, terminating transitions were less well represented in the replay buffer for Ant than in other environments (\figref{fig:termination}.). Because terminating transitions constitute a large component value difference compared to non-terminating transitions (no more control cost, forward progress, nor health reward can be accrued), these rare instances in the replay buffer were associated with very large critic losses (and gradients) that destabilized learning. \figref{fig:critic_loss} shows that the average critic loss of SAC-D for Ant was much larger compared to other environments. It's possible SAC suffered less from this issue because the component rewards partially cancel out in the composite reward (e.g., negative control cost partially cancels positive forward progress when added together), resulting in a smaller loss.

To add further evidence that the difficulty was due to predicting the value of terminating transitions, we also trained SAC and SAC-D on a version on Ant that never terminates. In this case, we found that both SAC and SAC-D performed better than their respective terminating settings and that their performance with each other was much more comparable (\figref{fig:non_terminal}. We similarly found that the critic loss was much better behaved (\figref{fig:critic_loss}).

We believe the nature of CAGrad to re-weight component gradients to maximize the worst-component improvement stabilized results because if one component had an overly large error that conflicted with other components, CAGrad would decrease the weight of the gradient thereby keeping the entire process more stable. However, our results here suggest it might also be possible to use SAC-D without CAGrad when being careful about the definition of termination conditions, data representation, or other measures that make the value prediction task easier.

\section{Analysis of CAGrad behavior}
\label{appendix:cagrad_analysis}

\begin{figure}[ht]
    \centering
    \subfigure[Decomposed critic loss]{
        \subfigure{
            \includegraphics[width=0.45\columnwidth]{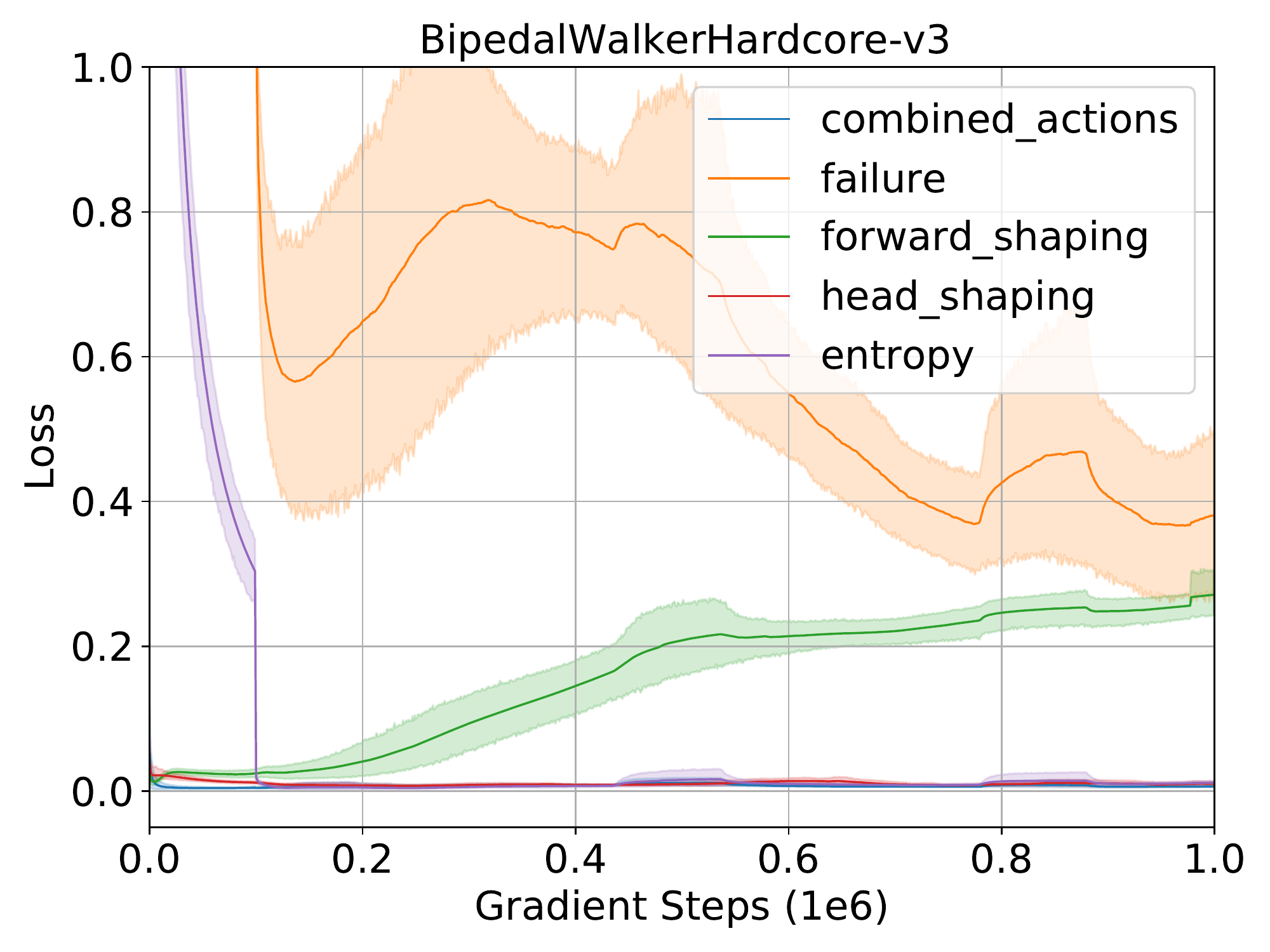}
            \addtocounter{subfigure}{-1}
        }
        \subfigure{
            \includegraphics[width=0.45\columnwidth]{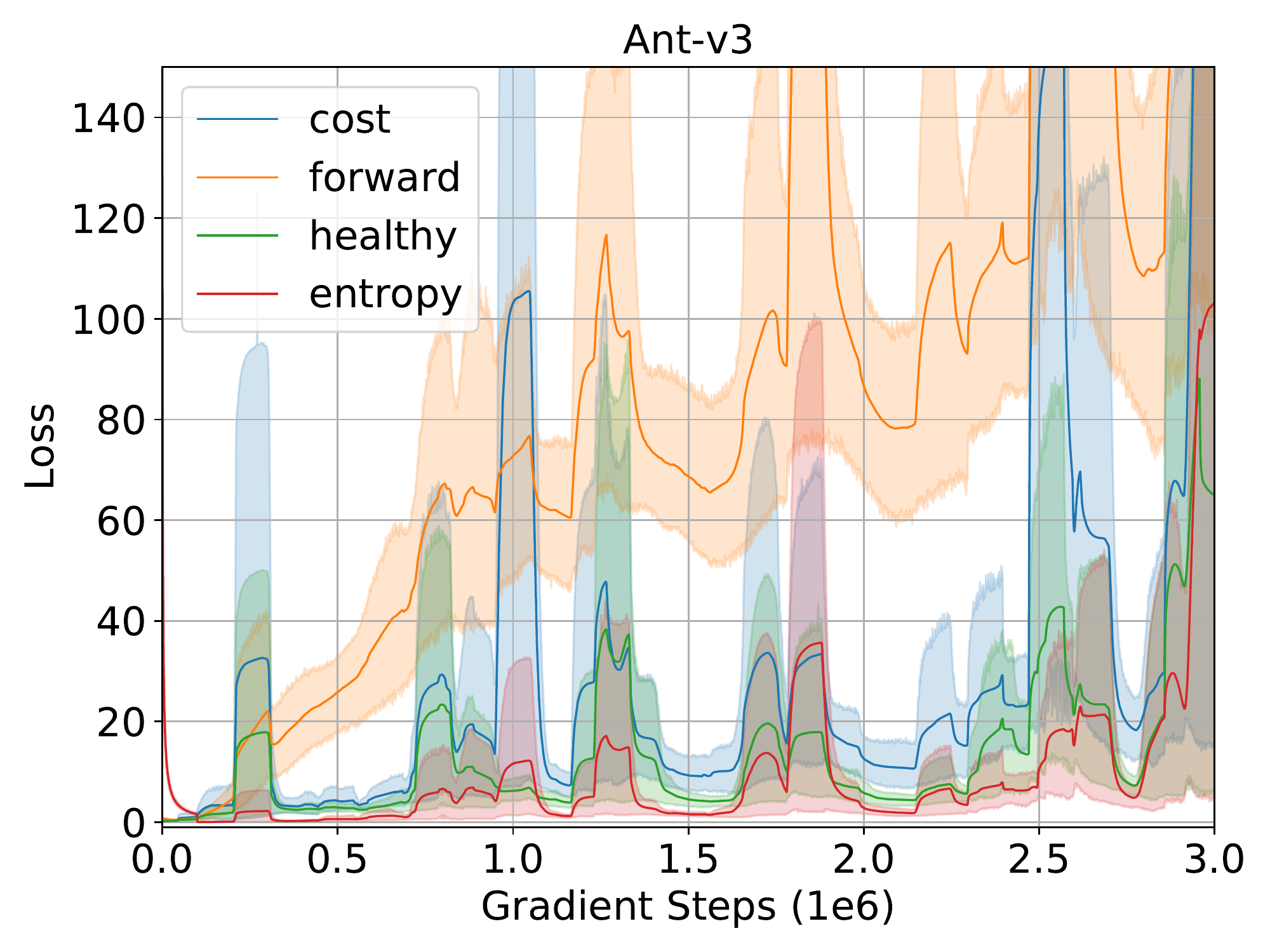}
            \addtocounter{subfigure}{-1}
        }
        \label{fig:factored_loss}
    }
    \subfigure[Optimized gradient weights]{
        \subfigure{
            \includegraphics[width=0.45\columnwidth]{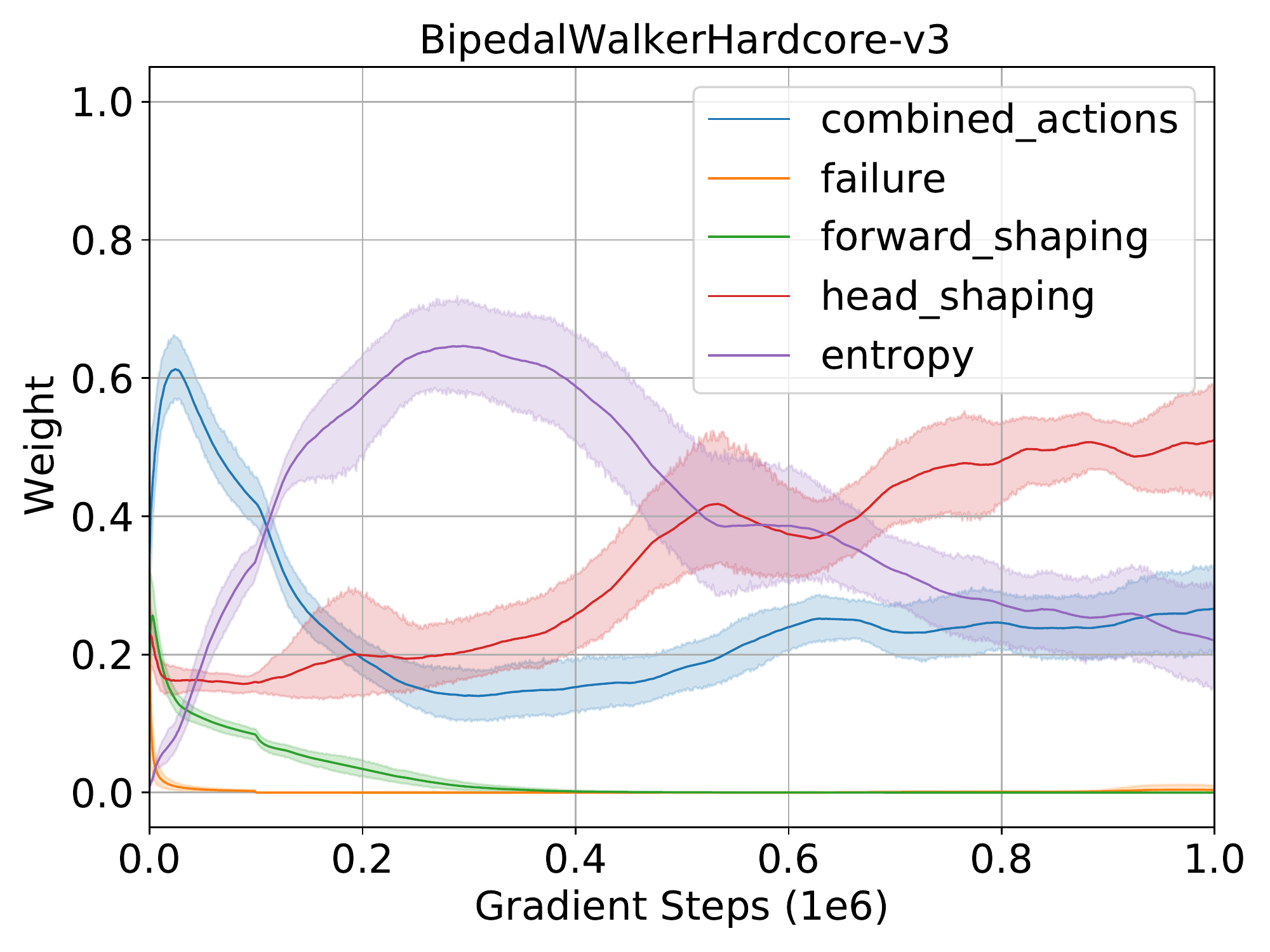}
            \addtocounter{subfigure}{-1}
        }
        \subfigure{
            \includegraphics[width=0.45\columnwidth]{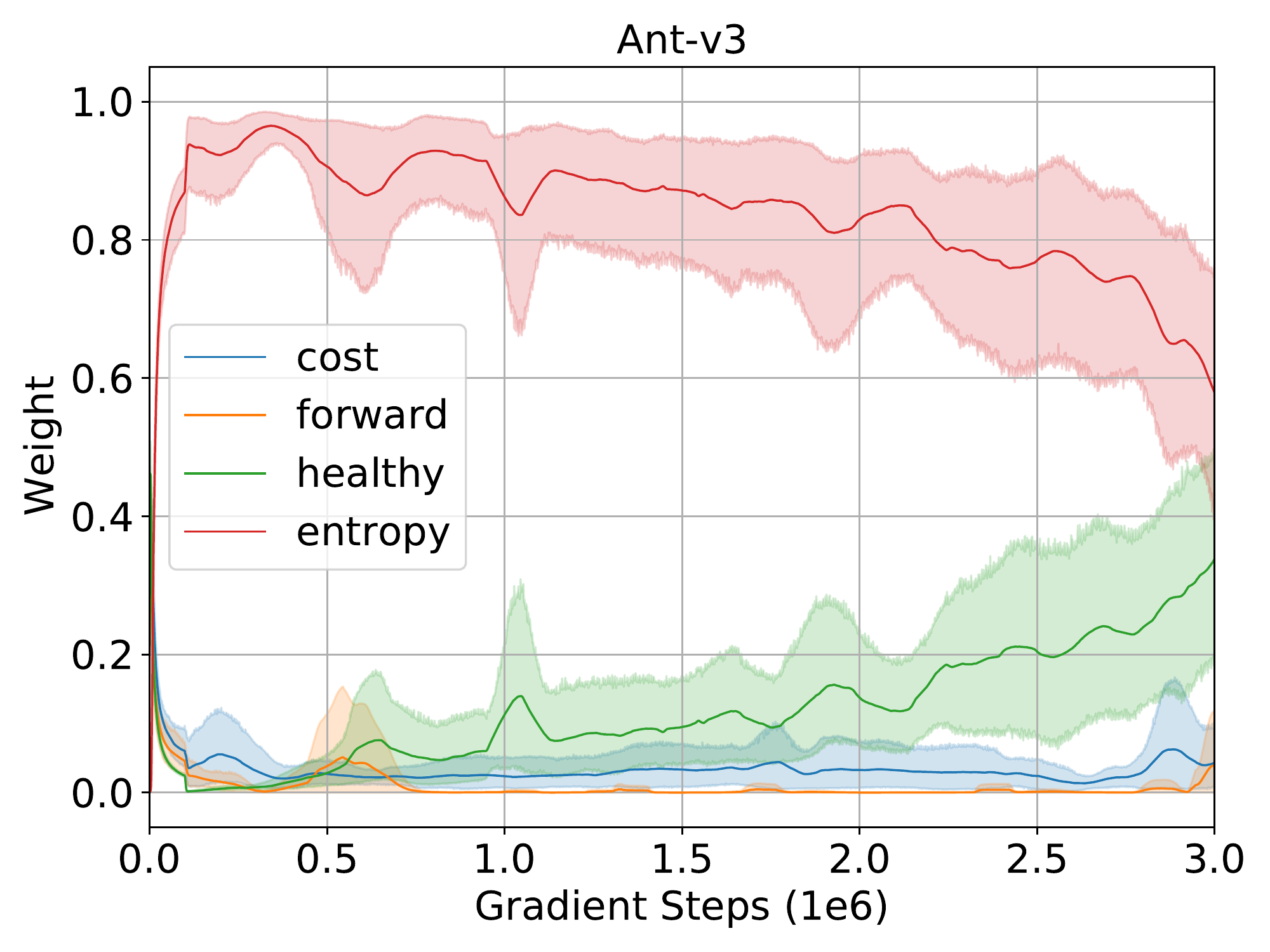}
            \addtocounter{subfigure}{-1}
        }
        \label{fig:cagrad_w}
    }
    \caption{\textbf{(a)} Decomposed critic loss during SAC-D-CAGrad training. The shaded regions represents the confidence interval. The left figure shows the decomposed critic loss of BipedalWalkerHardcore-v3, and the right figure shows the one of Ant-v3. \textbf{(b)} Weights of gradients optimized by CAGrad. Here we see that the component loss is inversely related to the weights CAGrad assigns to the component gradients.}
    \label{fig:cagrad_w_and_factored_loss}
\end{figure}

In the context of SAC-D, every gradient step, CAGrad optimizes the weighting vector of component Q-function gradients:
\begin{equation}
    \min_w F(w) := g_w ^\top g_0 + \sqrt{\phi} \|g_w\|, \text{where } g_w = \frac{1}{m+1} \sum^{m+1}_{i=1} w_i \nabla LQ_i(\theta)
    \label{eq:cagrad}
\end{equation}
where $\theta$ is a shared parameter, $g_0 = \frac{1}{m+1} \sum \nabla LQ_i(\theta)$, $\phi = c^2 \|g_0\|^2$, $c$ is a constant hyperparameter, and $\nabla LQ_i(\theta)$ is the gradient of the loss for the $i$th component Q-function with respect to Q-network parameters $\theta$. The solution to this optimization problem, $w^* = \argmin_w F(w)$, is used to derive the final gradient step direction: $\vec{\theta} \triangleq \sum_i^{m+1} w^*_i \nabla LQ_i(\theta)$.
Equation~\ref{eq:cagrad} can be interpreted as a function that balances the gradient magnitudes by assigning the small $w^*_i$ to the large gradients that conflict with other gradients.

The relationship of decomposed critic loss values and the result of the inner optimization is shown in \figref{fig:cagrad_w_and_factored_loss}.
In both BipedalWalkerHardcore-v3 and Ant-v3, the component Q-function losses are inversely related to the gradient weight CAGrad associates with the component gradients: the higher the critic loss, the lower the weight CAGrad tended to assign the component.
This result shows CAGrad prevents particularly large component errors from dominating the entire process, and helps explain why CAGrad effectively combated the loss of performance in Ant that was due to particularly large errors for rare terminating transitions. We expect CAGrad will be particularly useful in domains where different reward components induce returns with very different magnitudes.

\section{Influence}
\label{appendix:influence}

Here we provide fractional influence plots for the various environments studied. For each plot the factors are sorted by the fractional influence at the very last gradient step with larger influence towards the bottom.

\begin{figure}[ht]
    \centering
    \subfigure[SAC-D]{\includegraphics[width=0.49\columnwidth]{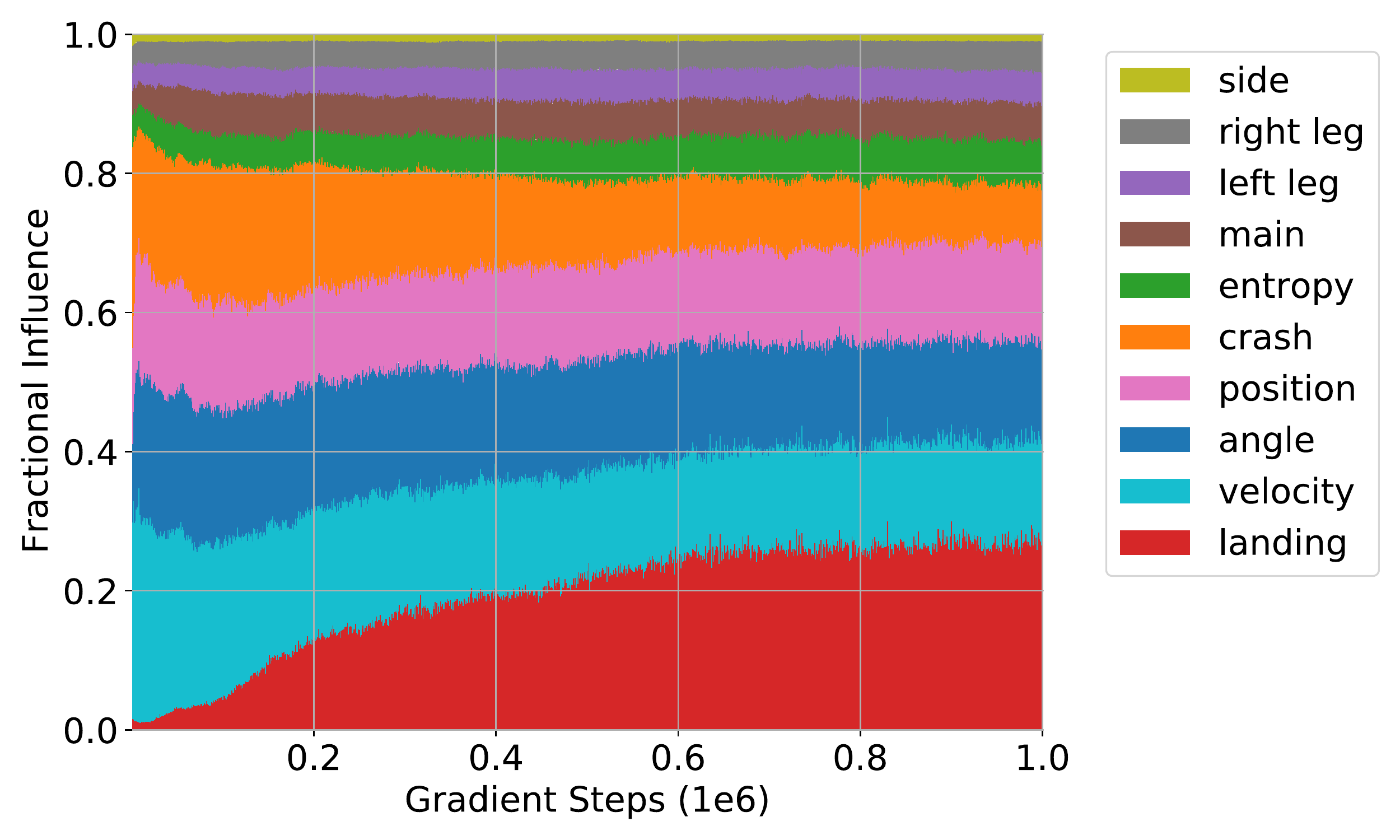}}
    \subfigure[SAC-D-CAGrad]{\includegraphics[width=0.49\columnwidth]{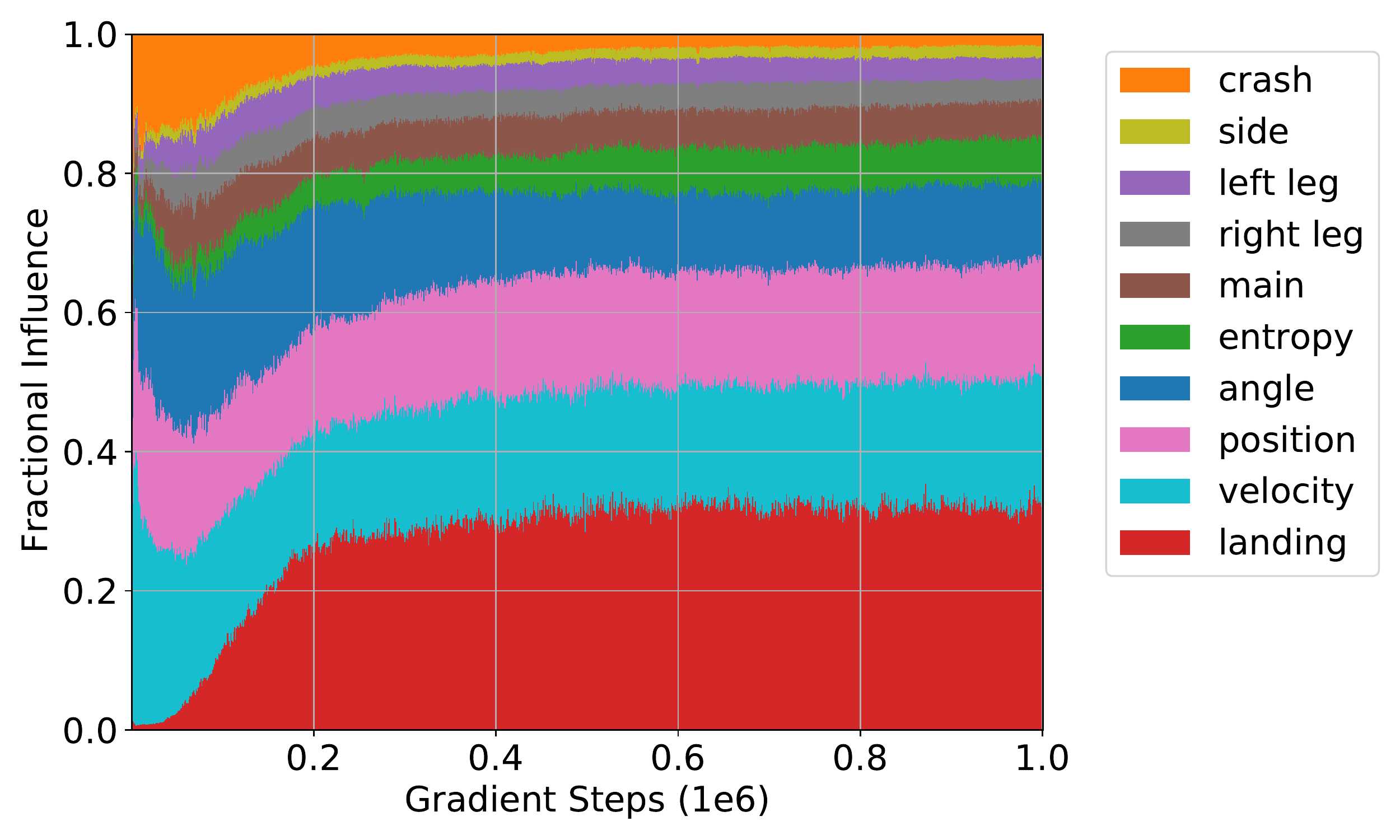}}
    \caption{Lunar Lander}
\end{figure}

\begin{figure}[ht]
    \centering
    \subfigure[SAC-D]{\includegraphics[width=0.49\columnwidth]{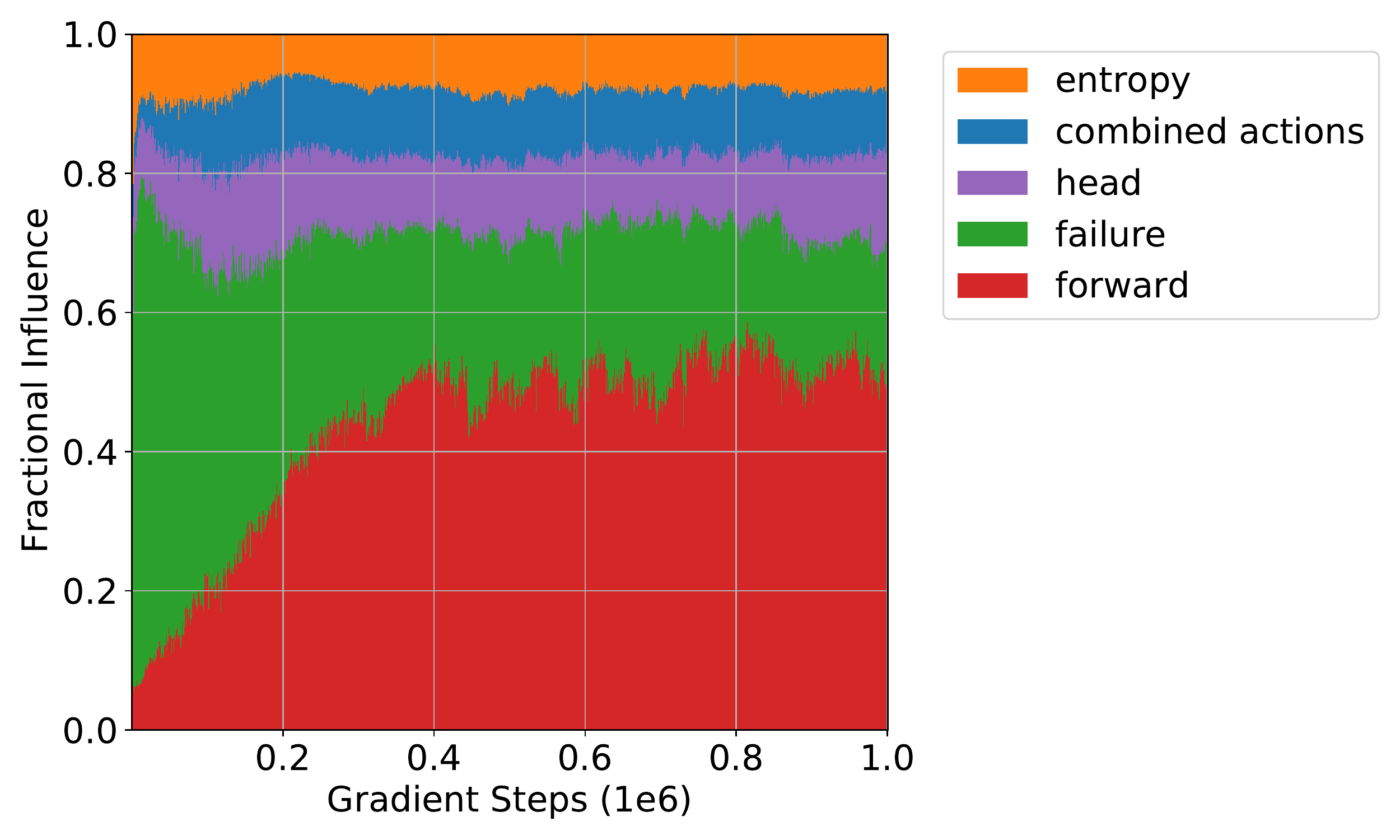}}
    \subfigure[SAC-D-CAGrad]{\includegraphics[width=0.49\columnwidth]{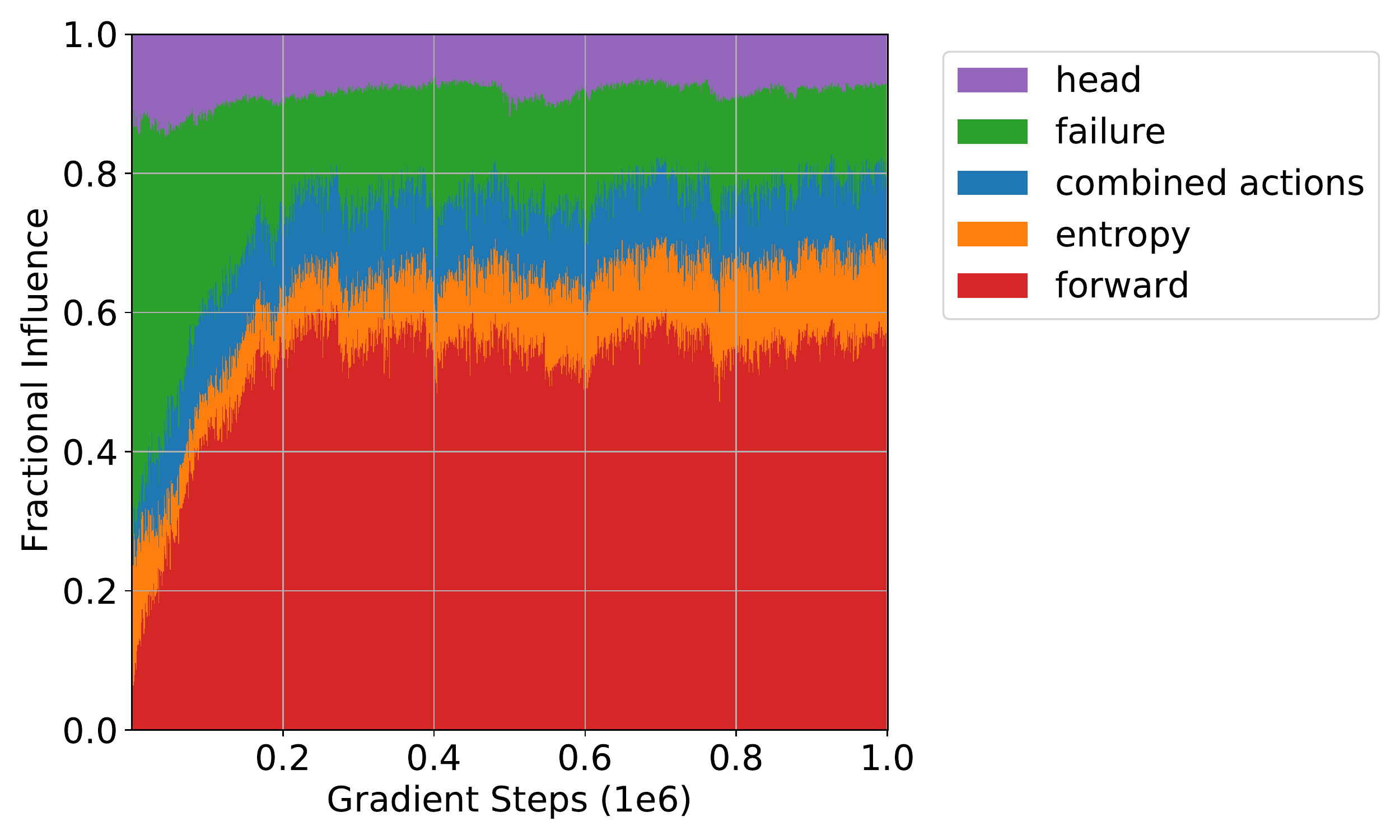}}
    \caption{Bipedal Walker}
\end{figure}

\begin{figure}[ht]
    \centering
    \subfigure[SAC-D]{\includegraphics[width=0.49\columnwidth]{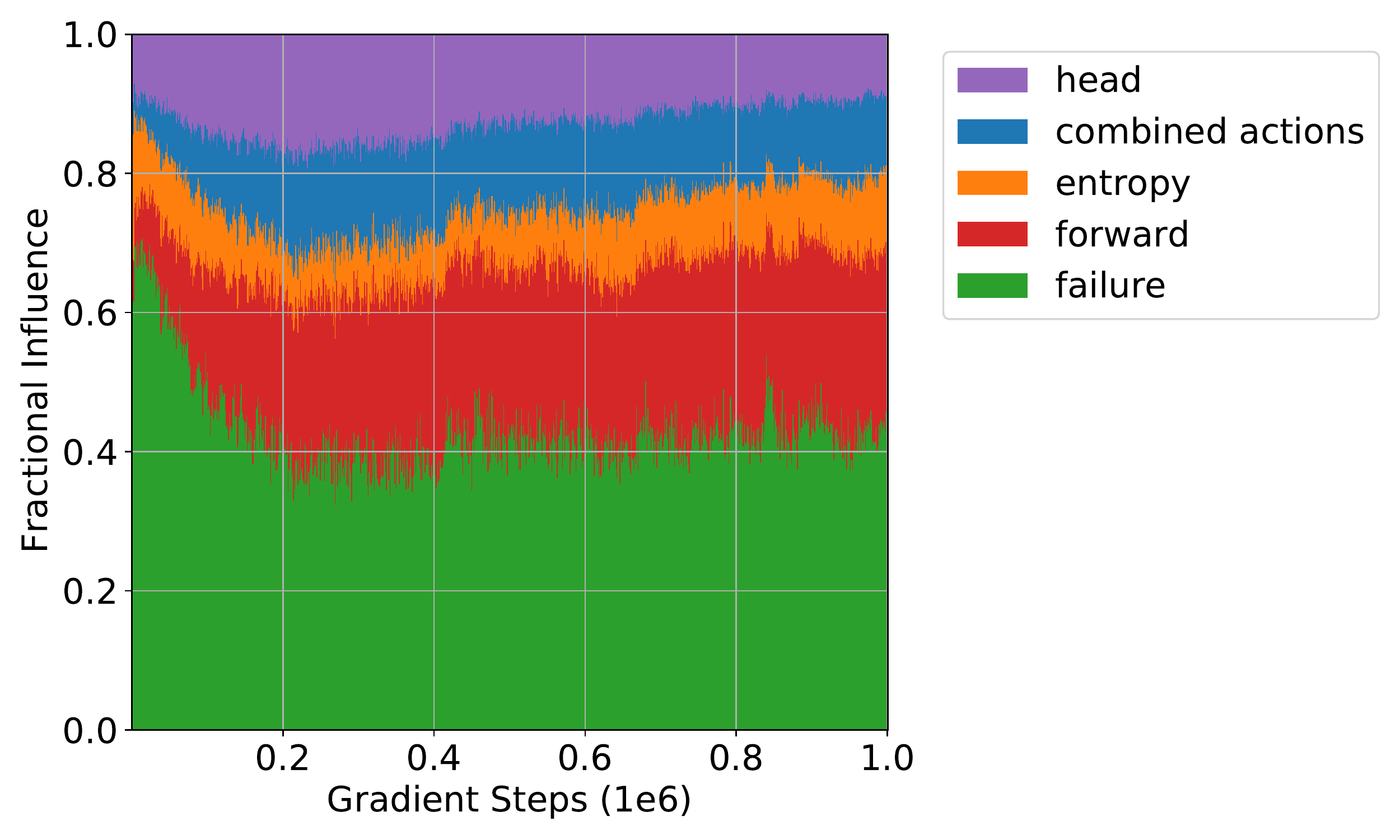}}
    \subfigure[SAC-D-CAGrad]{\includegraphics[width=0.49\columnwidth]{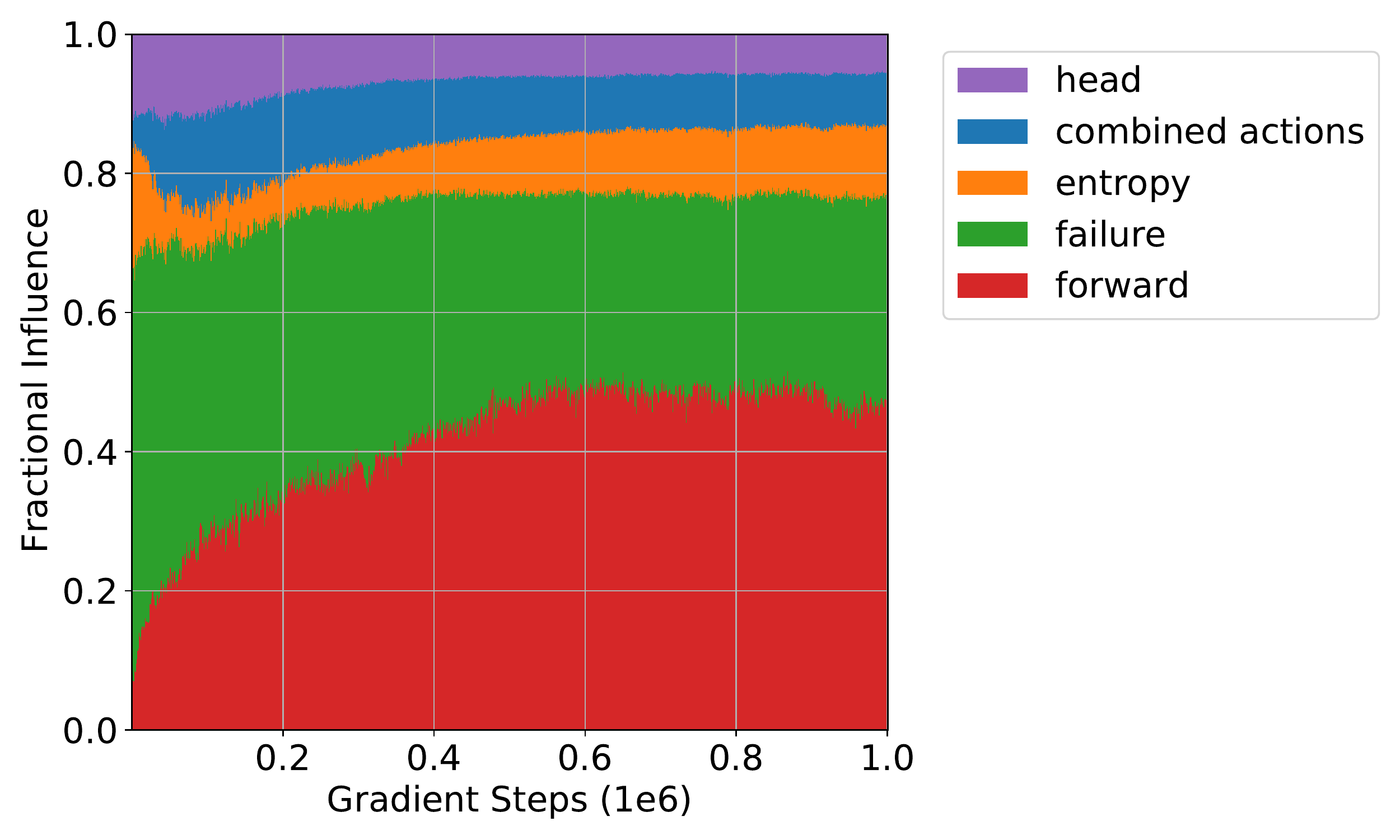}}
    \caption{Bipedal Walker Hardcore}
\end{figure}

\begin{figure}[ht]
    \centering
    \subfigure[SAC-D]{\includegraphics[width=0.49\columnwidth]{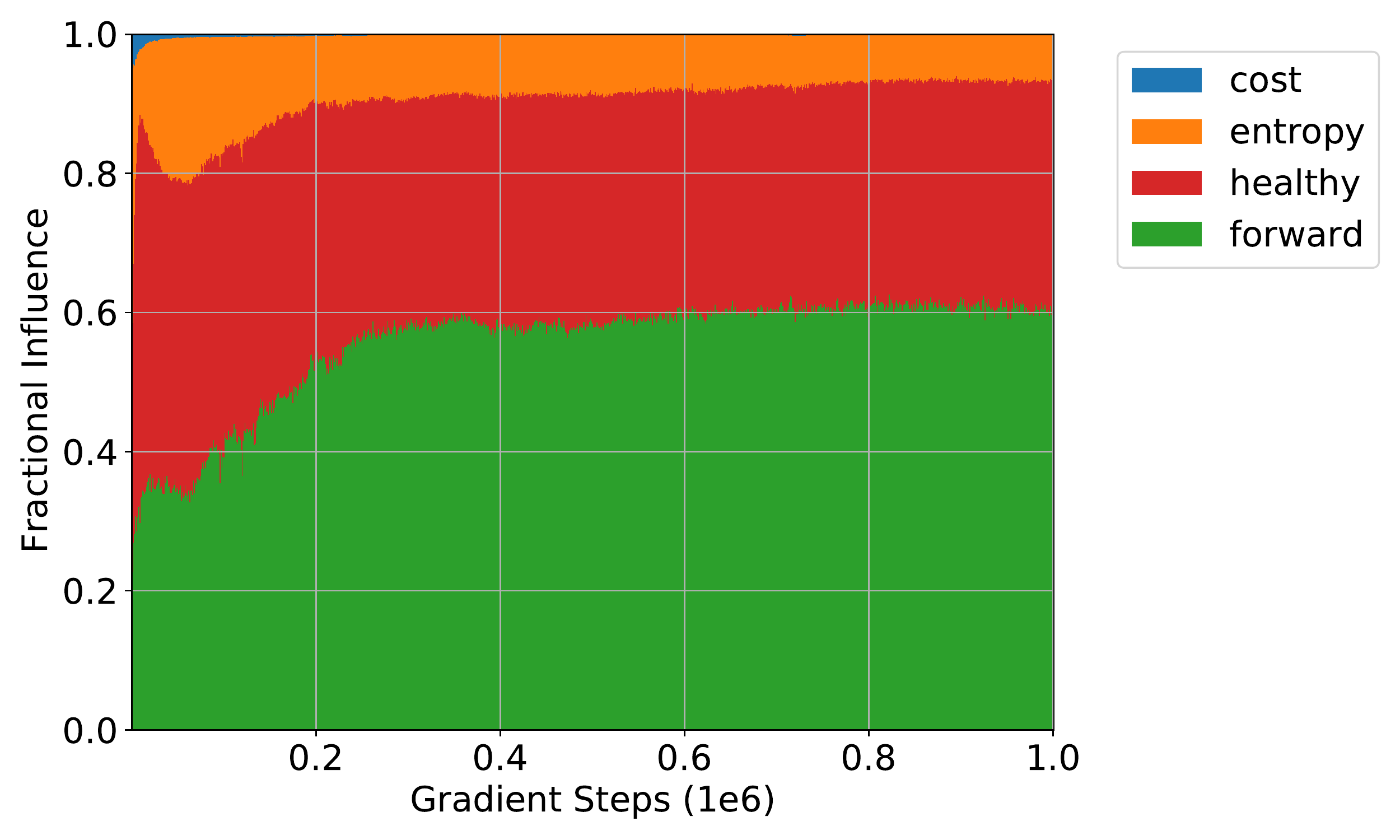}}
    \subfigure[SAC-D-CAGrad]{\includegraphics[width=0.49\columnwidth]{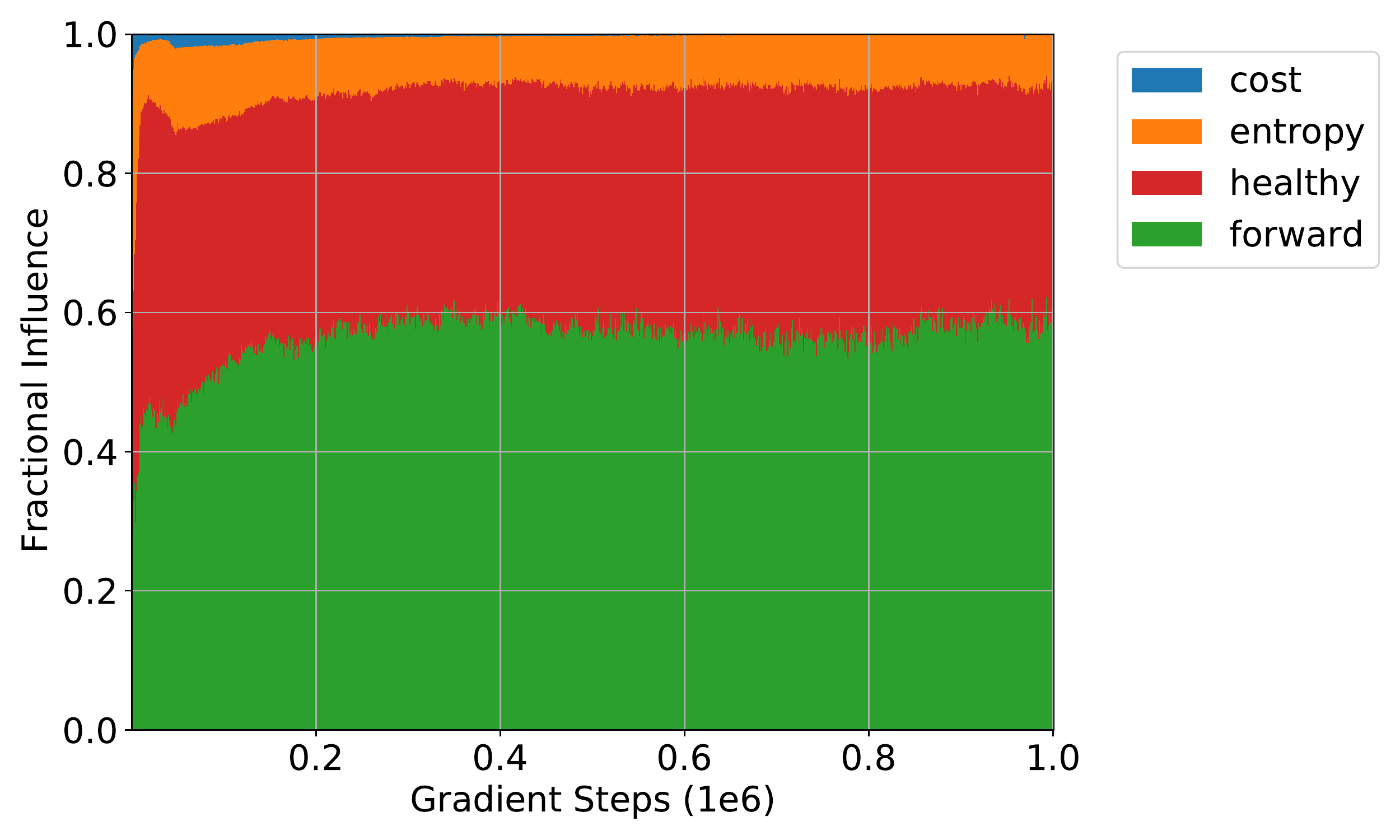}}
    \caption{Hopper}
\end{figure}

\begin{figure}[ht]
    \centering
    \subfigure[SAC-D]{\includegraphics[width=0.49\columnwidth]{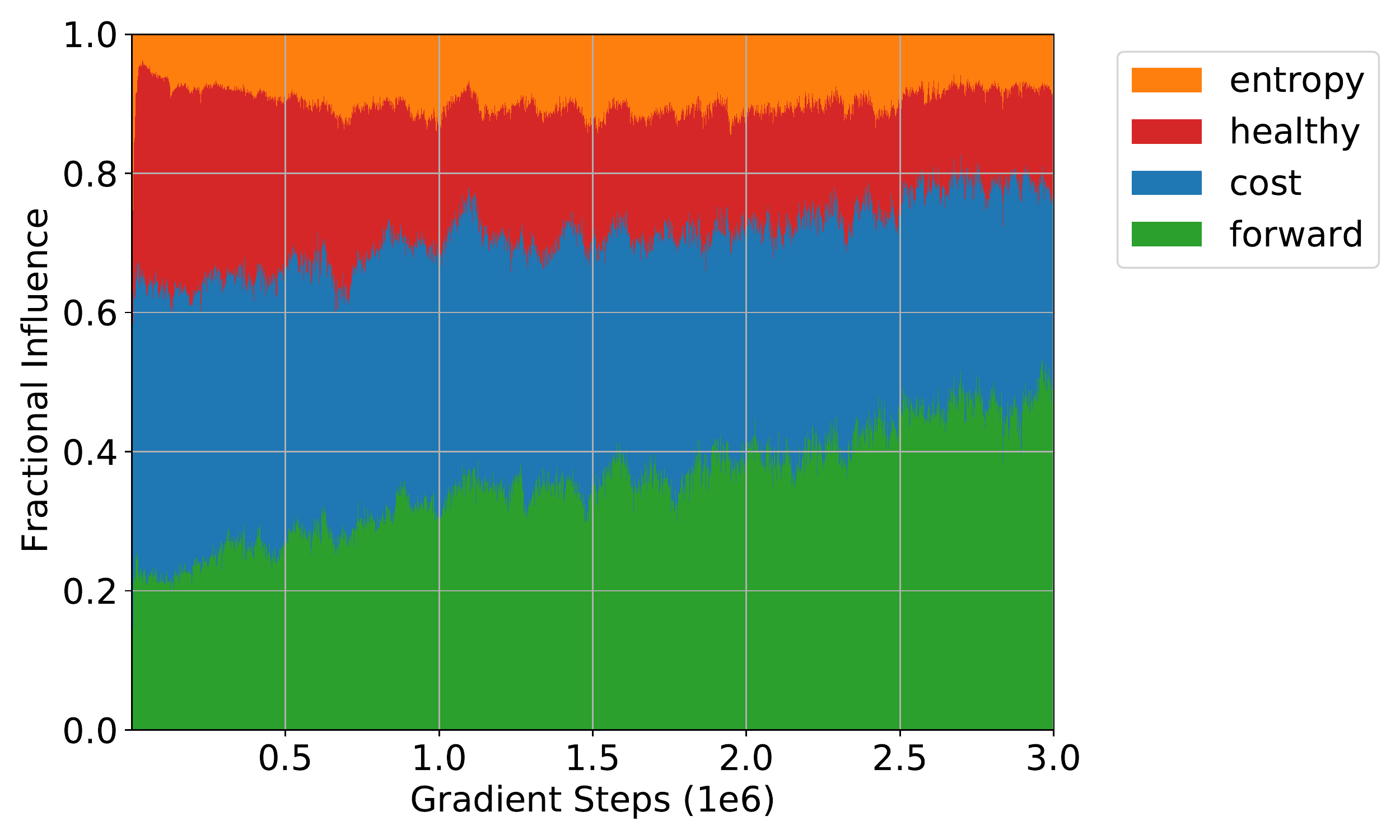}}
    \subfigure[SAC-D-CAGrad]{\includegraphics[width=0.49\columnwidth]{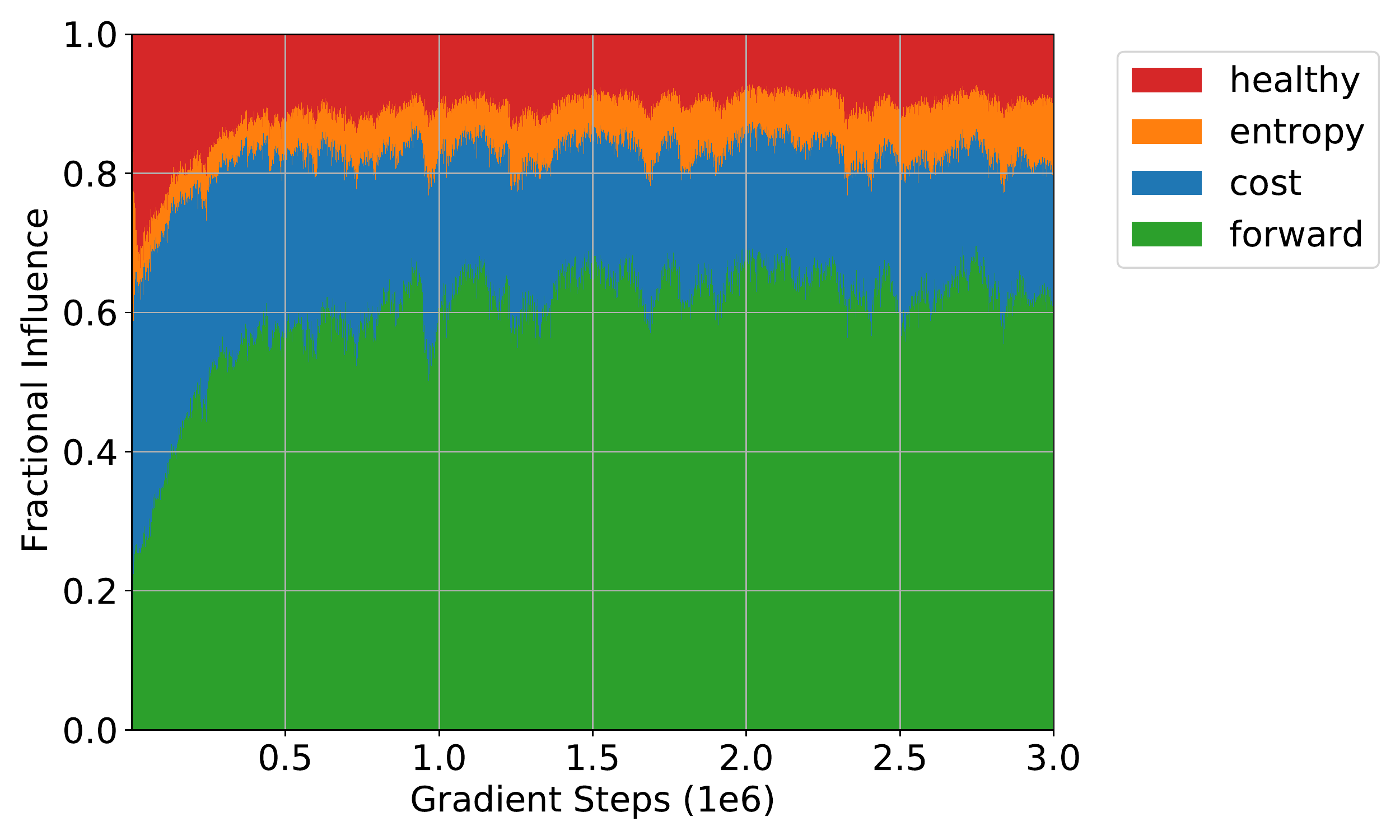}}
    \caption{Ant}
\end{figure}

\begin{figure}[ht]
    \centering
    \subfigure[SAC-D]{\includegraphics[width=0.49\columnwidth]{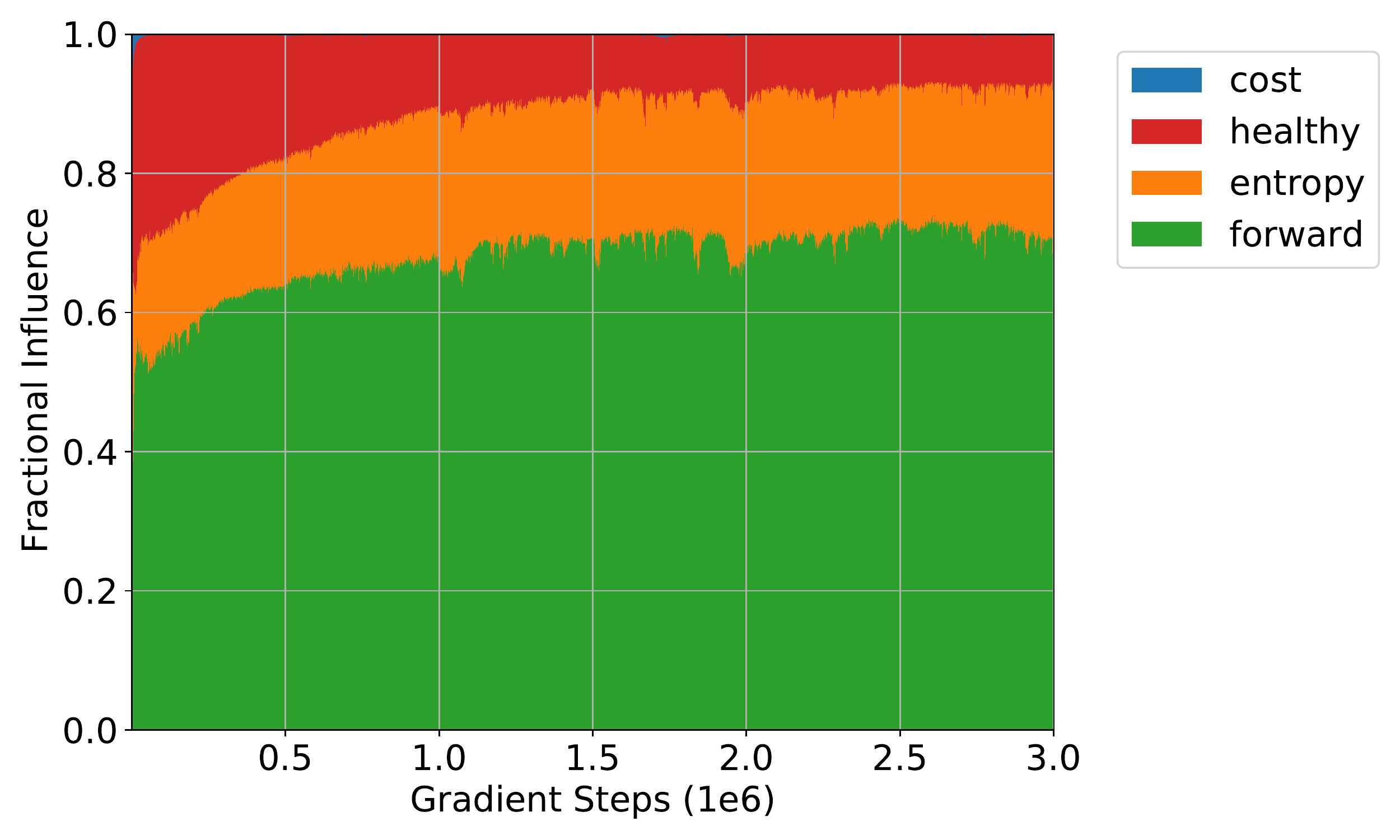}}
    \subfigure[SAC-D-CAGrad]{\includegraphics[width=0.49\columnwidth]{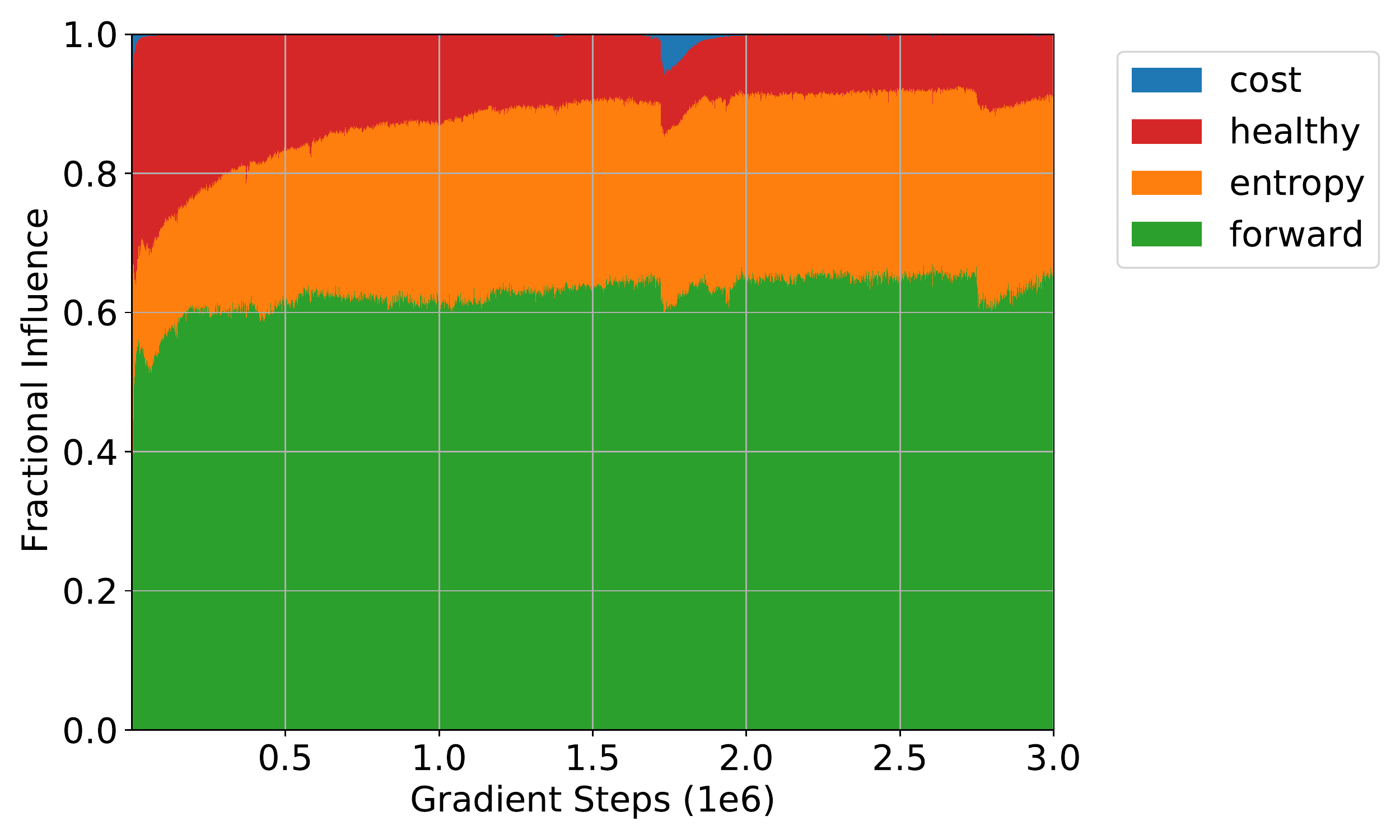}}
    \caption{Walker 2D}
\end{figure}

\begin{figure}[ht]
    \centering
    \subfigure[SAC-D]{\includegraphics[width=0.49\columnwidth]{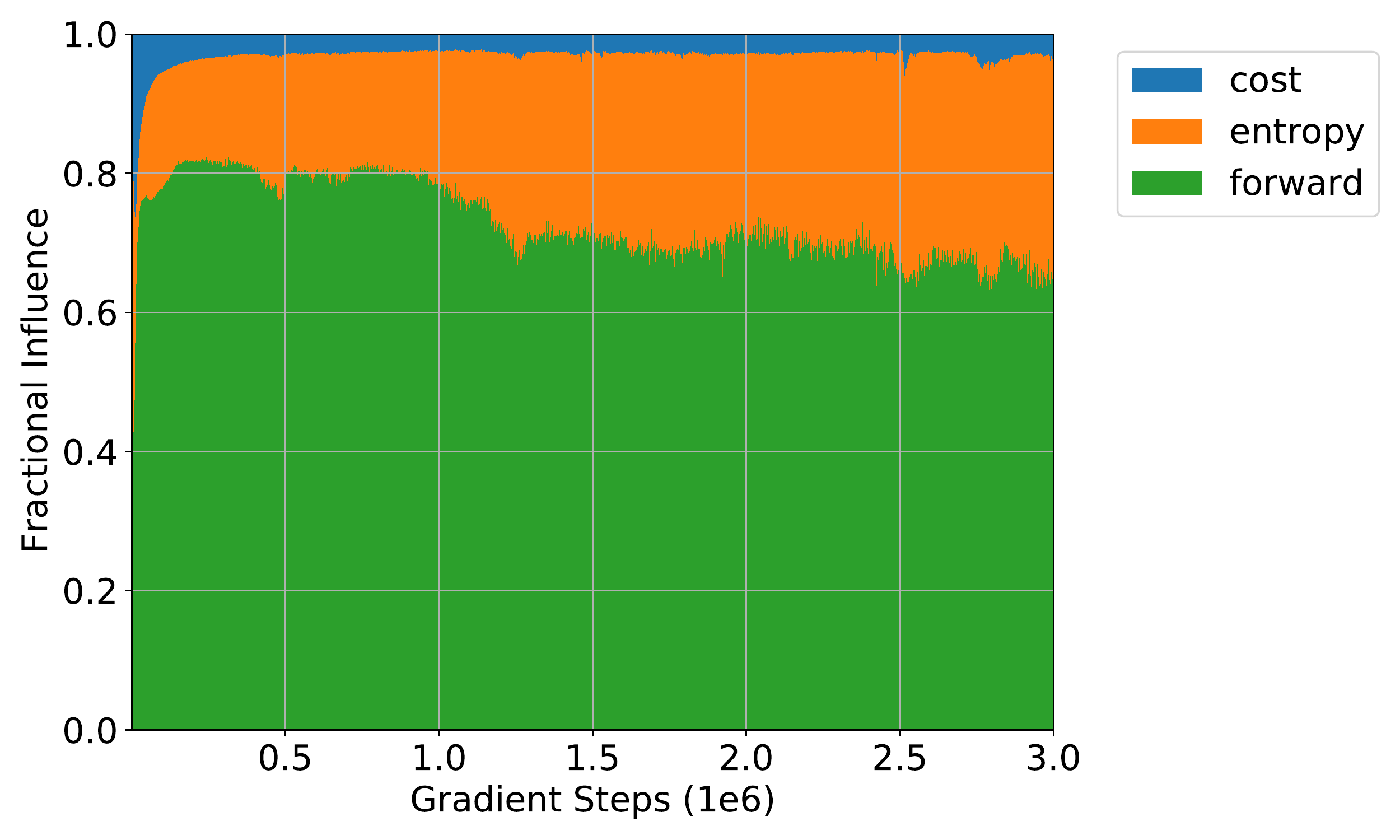}}
    \subfigure[SAC-D-CAGrad]{\includegraphics[width=0.49\columnwidth]{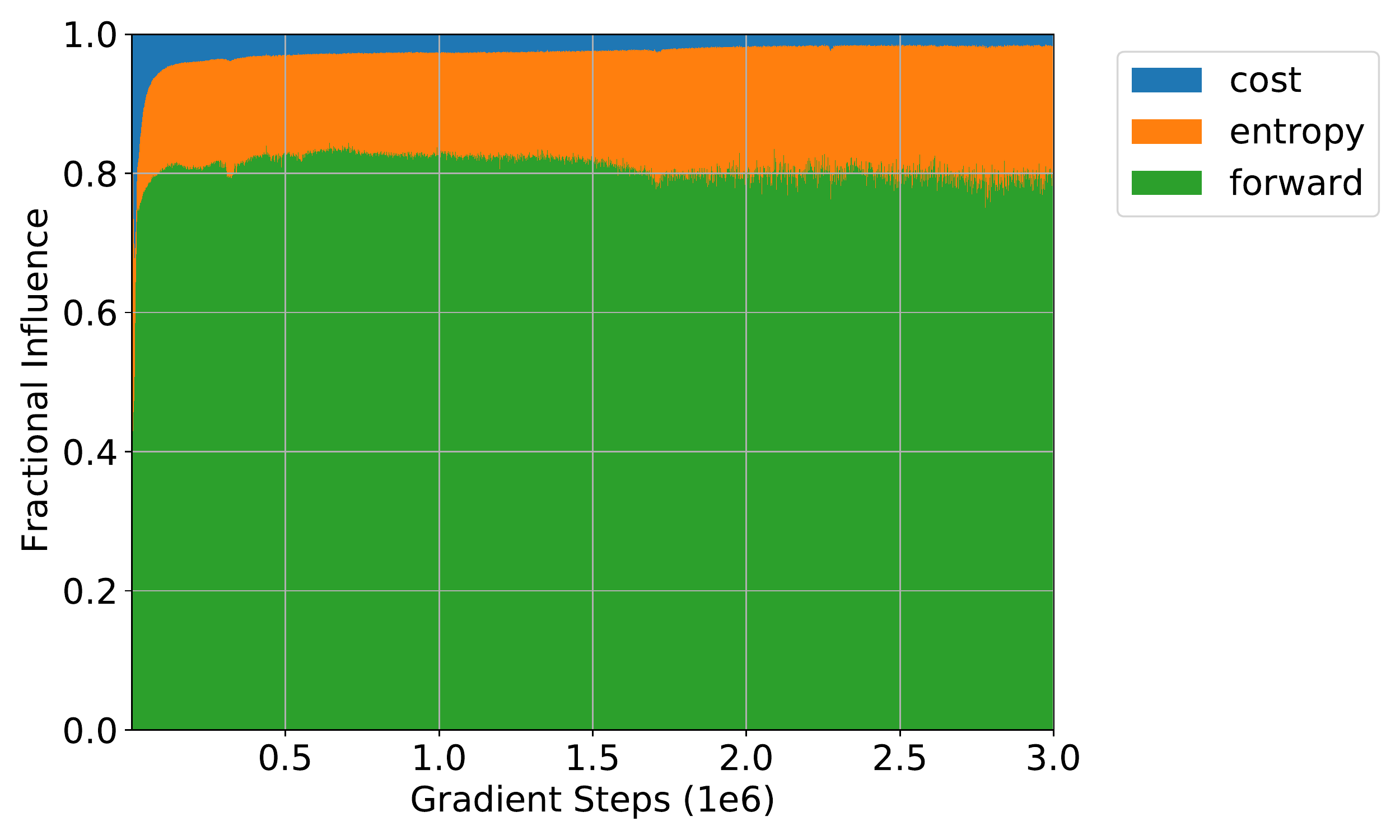}}
    \caption{Half Cheetah}
\end{figure}

\begin{figure}[ht]
    \centering
    \subfigure[SAC-D]{\includegraphics[width=0.49\columnwidth]{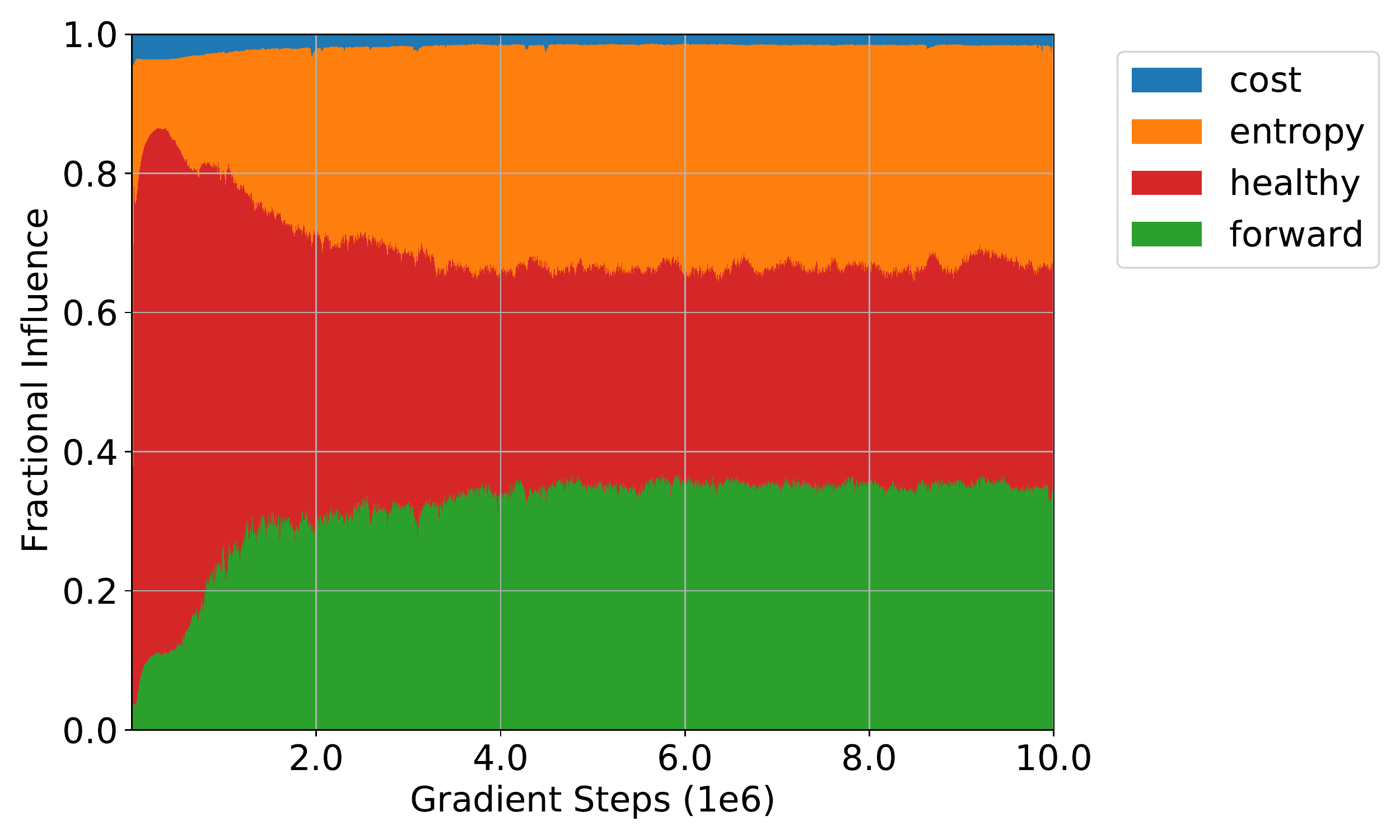}}
    \subfigure[SAC-D-CAGrad]{\includegraphics[width=0.49\columnwidth]{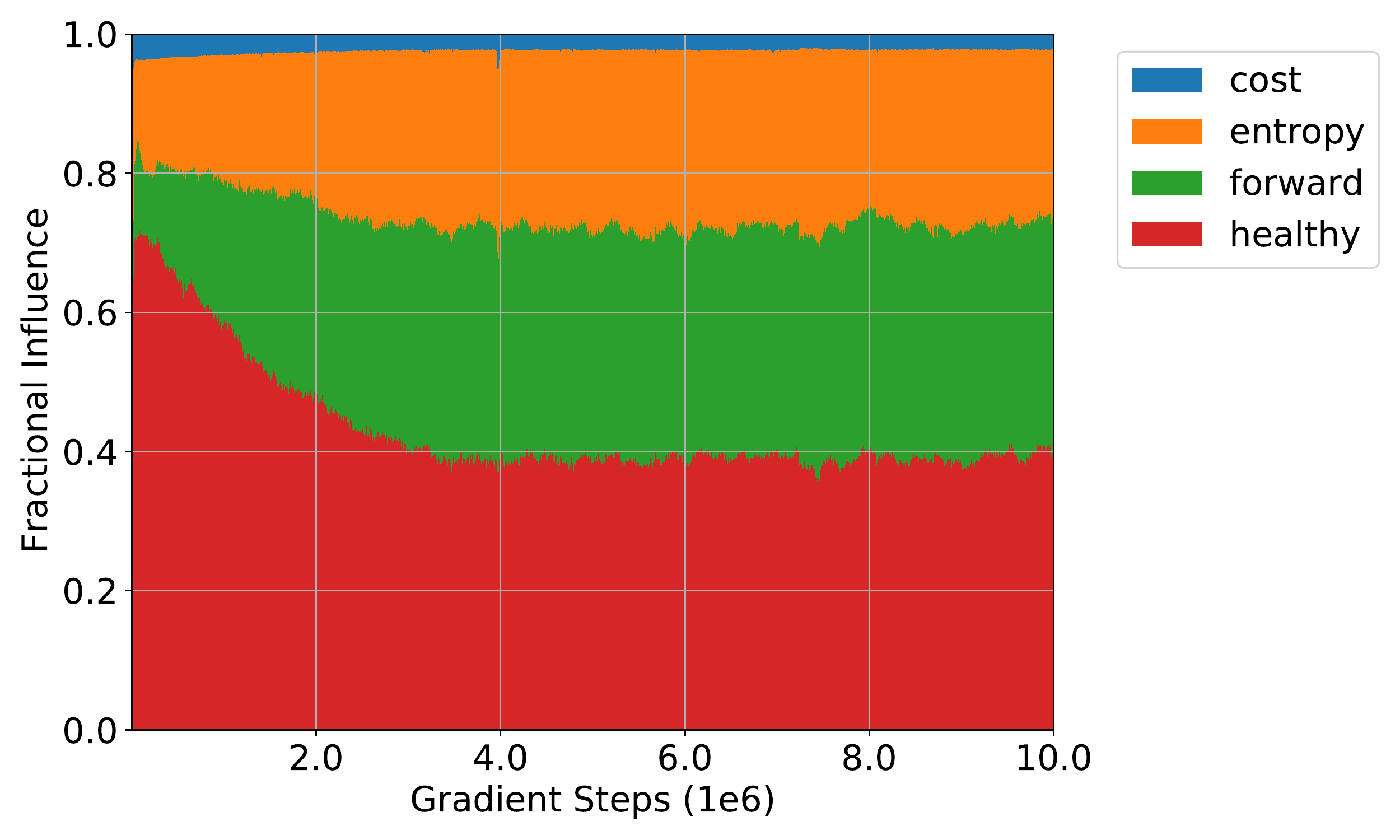}}
    \caption{Humanoid}
\end{figure}

\clearpage

\section{Markov features}
\label{appendix:markov}
In Sec.~\ref{sec:analysis_examples}, we showed that the Lunar Lander observation features were insufficient to properly predict when landing would occur, because the simulator only signaled successful landing after the agent had been stable on the ground for a certain number of time steps. Here we provide the description of the additional \textit{zero velocity trace feature} we added to the environment to make it Markov.
We define the zero velocity trace feature as: $V_{0}^\text{trace}(t) = V_{0}^{\text{steps}}(t)/c$, where $V_{0}^{\text{steps}}(t)$ is the number of time steps since all the velocities dropped below a threshold and $c$ is a fixed normalizing constant. In \figref{fig:ll_predictions} we use a value $c=40$. Velocity was considered to be zero when all velocity fields (horizontal, vertical and rotational) fell below 1e-3.

We compared the predictive accuracy with and without the $V_0^{\text{trace}}(t)$ feature. Using a policy trained with the baseline features we collected a dataset of 2000 trajectories (using the exploration policy). Next we trained two new value function models to predict the landing return (independent of all other factors), one for the baseline features and the other including the $V_0^{\text{trace}}(t)$ feature. For each time step, we computed the discounted Monte Carlo return ($\gamma=0.99$). Note that we did not use bootstrapping in the return calculations. This could have caused inaccuracies in the targets as some trajectories timed out rather than terminating in either a crash or a landing. Each model was then trained for 100 epochs, with each epoch consisting of randomly subdividing time steps from the 2000 trajectories into batches of 256 time steps each and training on each subdivision. We used the mean squared loss and computed updates using the ADAM optimizer with a learning rate of $0.001$.
We then collected an additional 200 trajectories in which the agent successfully landed (again, using the exploration policy). For these trajectories we computed the value estimates and returns. We then computed the interquartile mean (IQM) values of correlation and RMSE for the last 25 time steps of each trajectory (indicated by the dashed pink line in \figref{fig:ll_predictions}). From results in Table~\ref{tab:ll_markov}) we see a noticeable improvement in prediction accuracy. In particular, we see that the correlation is markedly improved, confirming that our predictions match the shape of the returns better in this time span.

\begin{table}[ht]
    \centering
    \caption{Landing Prediction Accuracy}    
    \begin{tabular}{@{}lrr@{}} \toprule
        Features & RMSE (std) & Correlation (std) \\
        \midrule
        Baseline & 5.61 (0.12) & 0.521 (0.030) \\
        Markov & \textbf{1.05 (0.28)} & \textbf{0.996 (0.001)}\\
        \bottomrule
    \end{tabular}
    \label{tab:ll_markov}
\end{table}

\section{Constraint experiment details}
\label{app:constraint}

The value function constraint experiment described in Sec.~\ref{sec:analysis_examples} and used to produce \figref{fig:ll_constraints} followed the same training procedures as the robustness experiments described in App.~\ref{appendix:detail}. However, the poor behavior, in which the unconstrained value estimates crossed zero, was most pronounced when we lowered the replay buffer size to 20k. The results shown \figref{fig:ll_constraints} were trained under this condition. 

Constraints can be applied in at least two ways. One is to constrain the component Q-value target by clipping. This prevents the update from pulling the predictions towards values that are invalid. Such a situation might result due to invalid bootstrapping estimates of the network. For example, the crash component in lunar lander can only be non-positive. Consequently, we constrain the target values as follows:
\begin{align*}
    y_{crash} = \min{\bigg(0, r_{crash} + \gamma Q_{crash}(s', a')\bigg)}.
\end{align*}

In addition to constraining the bootstrap target, we can also penalized the component value function estimate for being positive with the following additional loss term:
\begin{align*}
    L_{+;crash}=0.5\cdot \mathbbm{1}_{\mathbb{R}^+}(Q_{crash}(s, a)) \cdot Q_{crash}(s, a).
\end{align*}

Both these constraints have analogs for non-negative components. In the results shown in \figref{fig:ll_constraints}, we show only the effect on the \textit{crash} component (which benefited the most), but we also applied constraints to the \textit{side}, \textit{main}, and \textit{landing} components.

\section{Weight scheduling details}
\label{appendix:annealing}
In Sec.\ref{sec:analysis_examples}, we schedule the component weight of the Bipedal Walker Hardcore \textit{failure} reward from zero to one to remedy its exploration inhibiting properties. Specifically, we used the schedule function:
\begin{equation}
w_{\text{failure}}(t) = \tanh\left(\beta\left|t-\tau^\text{warmup}\right|_+\right),
\end{equation} 
where $t$ is the number of gradient steps taken; $\medmath{\left|x\right|_+ = \left\{\begin{array}{l} 
0.01, \,x < 0\\ 
x, \,x \geq 0
\end{array}\right.}$; $\tau^\text{warmup}$ is the number of gradient steps required before the value begins increasing; and $\beta$ controls the speed at which the weight returns to 1 after the warm-up period. We used $\tau^\text{warmup}=100$ and $\beta=0.0004$. With these values, the \textit{failure} component weight came close to its original value of 1 after one million gradient steps, as shown in \figref{fig:scheduling_rate}. 

\begin{figure}[ht]
\centering
\includegraphics[width=0.5\columnwidth]{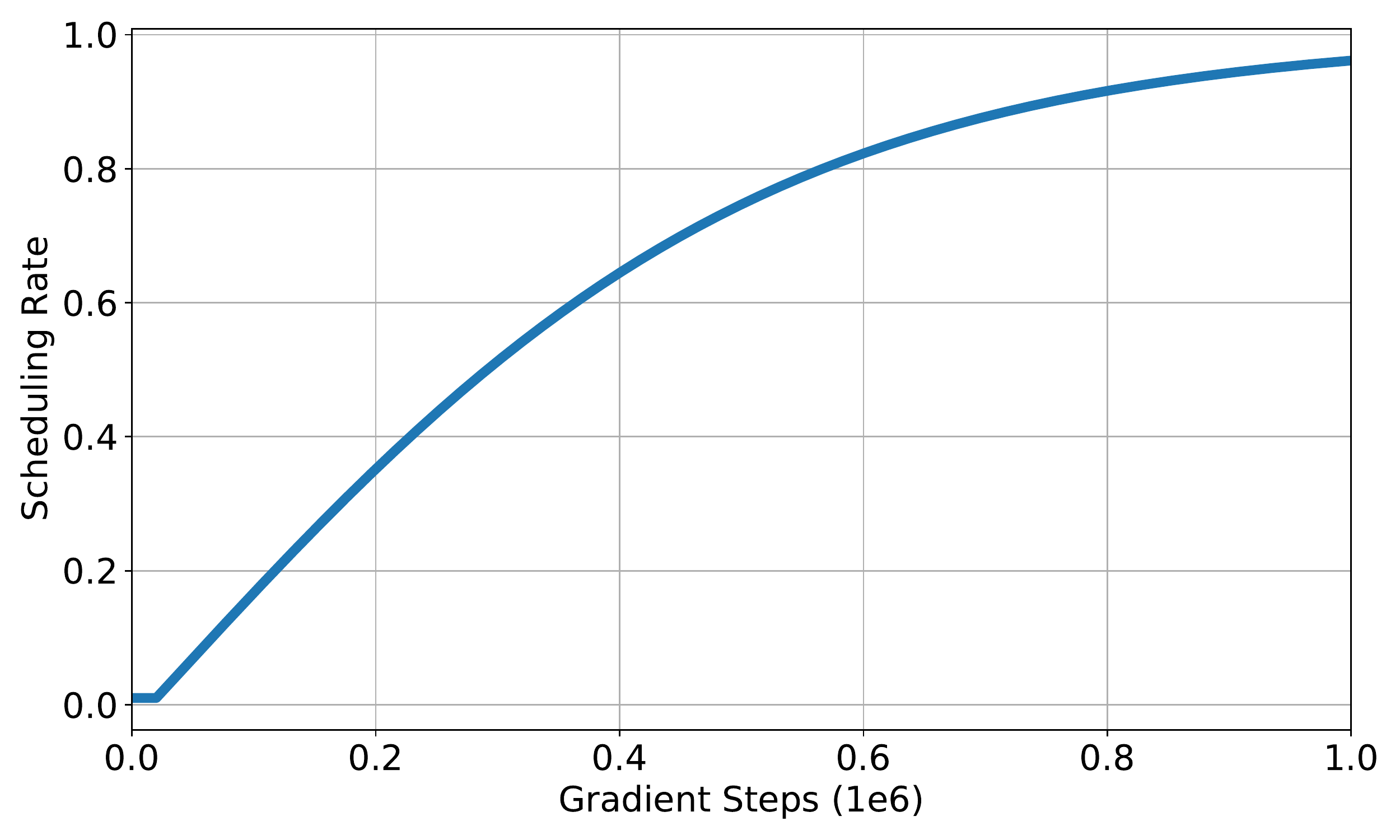}
\caption{Scheduling rate function for reverse annealing}
\label{fig:scheduling_rate}
\end{figure}

The results of these \textit{failure} component weight scheduling experiments are shown in the score distribution plot of \textit{forward} progress per episode score in \figref{fig:bwh_score_dist} and compared to the baseline (constant weight scenario). Additionally, in \figref{fig:bwh_annealing_influence} we see the effect that weight scheduling has on the influence of the \textit{failure} and \textit{forward} reward components. With \textit{failure}'s initial influence reduced, \textit{forward} is able to dominate the policy behavior from early on.

\begin{figure}[ht]
    \centering
    \subfigure[Constant weights]{\includegraphics[width=0.49\columnwidth]{figures/influence/bwh_cagrad_influence.pdf}}
    \subfigure[Reverse annealing]{\includegraphics[width=0.49\columnwidth]{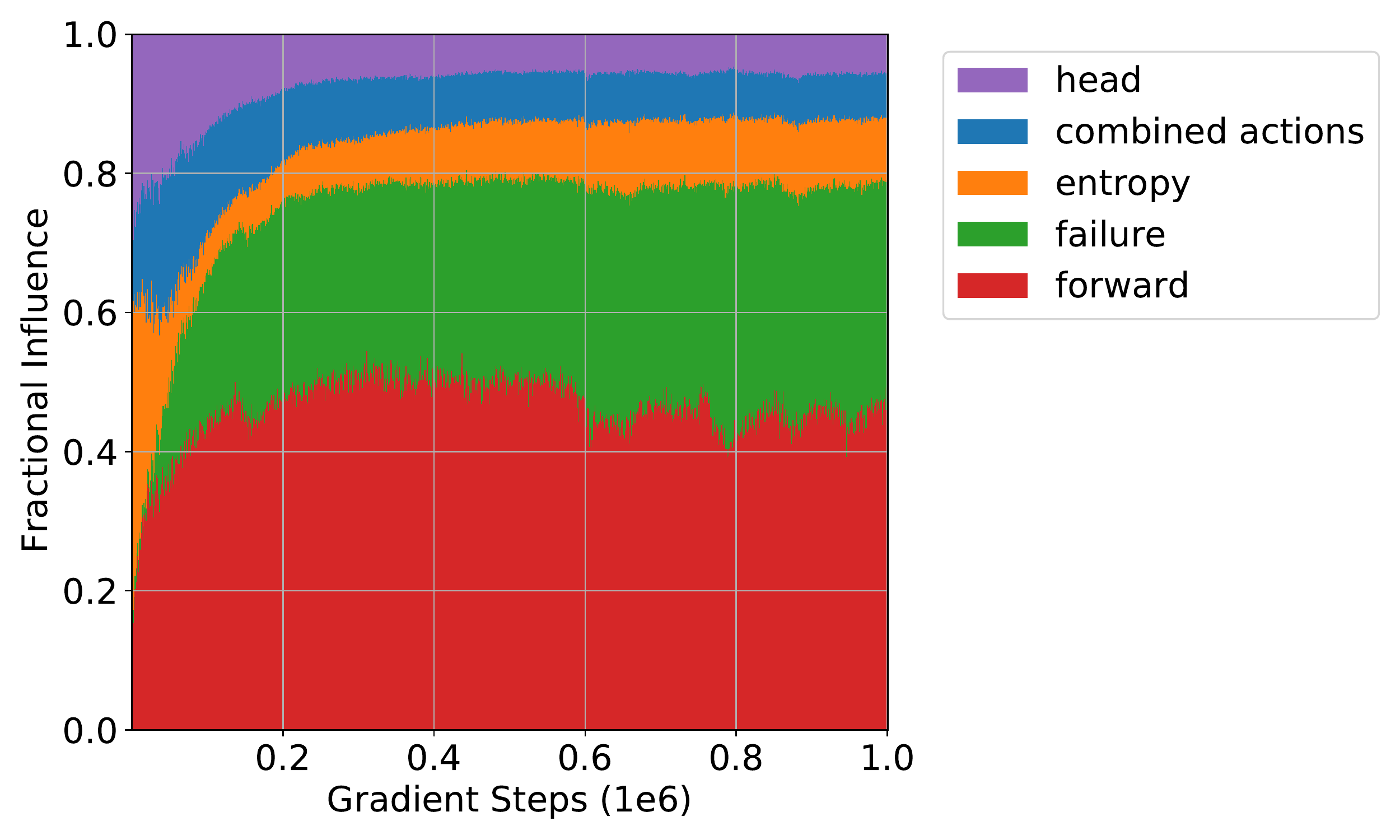}}  
    \caption{Figures show the impact of reverse annealing on learning with SAC-D-CAGrad in the Bipedal Walker Hardcore environment. (\textbf{a}) When a constant weighting of 1 is applied to all components \textit{failure} dominates early learning. (\textbf{b}) Reverse annealing the weight on \textit{failure} allows \textit{forward} to drive policy behavior from early on.}
    \label{fig:bwh_annealing_influence}
\end{figure}

\end{document}